\newtheorem{theorem}{Theorem}
\newtheorem{lemma}[theorem]{Lemma} 
\newcommand{\triangleq}{\stackrel{\triangle}{=}} 
\title{Beyond Training-time Poisoning: Component-level and Post-training Backdoors in Deep Reinforcement Learning}
\author{
\textnormal{
\begin{tabular}{c}
\textbf{Sanyam Vyas\textsuperscript{1,2}, Alberto Caron\textsuperscript{2}, Chris Hicks\textsuperscript{2}, Pete Burnap\textsuperscript{1}, Vasilios Mavroudis\textsuperscript{2}} \\
\textsuperscript{1}Cardiff University \\
\textsuperscript{2}The Alan Turing Institute \\
vyass3@cardiff.ac.uk, acaron@turing.ac.uk, c.hicks@turing.ac.uk, \\ burnapp@cardiff.ac.uk, vmavroudis@turing.ac.uk
\end{tabular}
}
}
\begin{document}

\maketitle

\begin{abstract}

Deep Reinforcement Learning (DRL) systems are increasingly used in safety-critical applications, yet their security remains severely underexplored. This work investigates backdoor attacks, which implant hidden triggers that cause malicious actions only when specific inputs appear in the observation space. Existing DRL backdoor research focuses solely on training-time attacks requiring full adversarial access to the training pipeline. In contrast, we reveal critical vulnerabilities across the DRL supply chain where backdoors can be embedded with significantly reduced adversarial privileges. We introduce two novel attacks: (1) TrojanentRL, which exploits component-level flaws to implant a persistent backdoor that survives full model retraining; and (2) InfrectroRL, a post-training backdoor attack which requires no access to training, validation, or test data. Empirical and analytical evaluations across six Atari environments show our attacks rival state-of-the-art training-time backdoor attacks while operating under much stricter adversarial constraints. We also demonstrate that InfrectroRL further evades two leading DRL backdoor defenses. These findings challenge the current research focus and highlight the urgent need for robust defenses.

\end{abstract}

\section{Introduction}

Deep Reinforcement Learning (DRL) delivers critical capabilities in safety-sensitive domains including autonomous vehicles~\cite{fayjie2018driverless}, nuclear fusion control~\cite{degrave2022magnetic}, cyber defense~\cite{vyas2025towards}, and drug 
discovery~\cite{tan2022reinforcement}, yet introduces serious security vulnerabilities to adversarial attacks during training and deployment. Compromised agents risk severe consequences~\cite{pattanaik2017robust}, making robust defenses essential.

Backdoor attacks compromise DRL agents through trigger-conditional malicious behavior while preserving normal performance on benign inputs. Current DRL backdoor research~\cite{rathbun2024adversarial,rathbun2024sleepernets,cui2023badrl,wang2021backdoorl} focuses narrowly on attacks requiring excessive adversary privilege, overlooking critical threats across the DRL supply chain. Moreover, existing methods demand impractical capabilities: infiltrating secure training pipelines, reverse-engineering proprietary codebases, developing undetectable attack scripts, and unrealistic requirements like direct RAM manipulation or full state-representation control, rendering them highly impractical beyond academic settings.

This work shifts focus from conventional training-time attacks to component-level and post-training backdoors. Our proposed attacks, \textbf{TrojanentRL} and \textbf{InfrectroRL}, achieve superior effectiveness and evasiveness with substantially reduced adversarial access compared to existing literature.

Inspired by threat models in~\cite{langford2024architectural,bober2023architectural}, \textit{TrojanentRL} embeds a backdoor in the DRL rollout buffer, achieving superior stealth under these assumptions~\cite{langford2024architectural,gu2023mamba}. Crucially, \textit{under the same assumptions}, TrojanentRL remains effective against all retraining and fine-tuning DRL backdoor defenses~\cite{chen2023bird,yuanshine}.

\textit{InfrectroRL} advances DRL backdoor threat models through direct, data-free modification of \textit{pretrained} model parameters~\cite{liu2018trojaning,cao2024data}. By optimizing triggers to establish persistent backdoor pathways that influence sequential actions, this attack operates with minimal computational overhead by circumventing training requirements.

Following rigorous DRL security evaluation standards~\cite{kiourti2020trojdrl,cui2023badrl,rathbun2024sleepernets,bharti2022provable,chen2023bird,yuanshine}, we benchmark both attacks across six Atari environments using established backdoor metrics. For InfrectroRL, we further: (1) derive theoretical guarantees of evasive performance during benign operation, and (2) demonstrate robust effectiveness against state-of-the-art defenses~\cite{chen2023bird,yuanshine}, addressing a critical gap in prior literature. Our main contributions can be summarized as follows:

\begin{itemize}[left=1pt]
    \item We present a \textbf{new end-to-end threat model}, i.e., a DRL threat model spanning multiple supply chain stages, revealing vulnerabilities beyond training-time attacks.
    
    \item We present \textbf{TrojanentRL} and \textbf{InfrectroRL}, novel backdoor attacks that achieve superior empirical performance over existing DRL backdoor attacks, while operating under significantly reduced adversarial access assumptions.

    \item We provide \textbf{theoretical guarantees} on InfrectroRL’s evasiveness under benign operation and perform \textbf{rigorous validation} across six Atari environments for both attacks. We also illustrate InfrectroRL's ability to empirically evade two state-of-the-art DRL backdoor defenses~\cite{chen2023bird,yuanshine}.
\end{itemize}

    


\begin{figure*}
    \hspace{0.18\textwidth} 
    \includegraphics[width=0.70\textwidth]{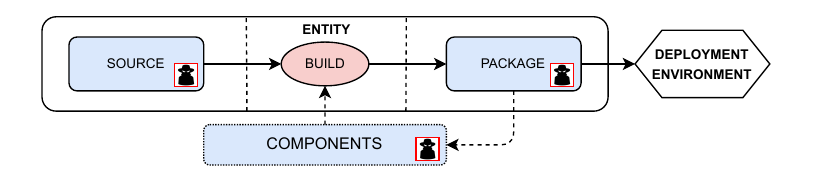}
    \caption{We ground our DRL threat model in the SLSA supply chain framework (\url{https://slsa.dev/spec/v0.1/threats}), categorizing AI supply chains into: \textbf{Source} (HuggingFace, TorchHub, GitHub), \textbf{Entity} integration (third-party developers, anonymous contributors, end users) of performance-enhancing \textbf{Components} (RL/ML libraries), \textbf{Packaging} (compressed artifacts), and \textbf{Deployment Environment}. Attacks (InfectroRL, TrojanentRL) exploit vulnerabilities from source integration through packaging.}
    \setlength{\belowcaptionskip}{-10pt}
    \label{fig:threat-model}
\end{figure*}

\section{Background}
\label{sec: background}

\subsection{Reinforcement Learning} 

Reinforcement Learning (RL) formalizes sequential decision-making via agent-environment interactions, modeled as a Markov Decision Process $(\mathcal{S}, \mathcal{A}, \mathcal{P}, \mathcal{R}, \gamma)$. At timestep $t$, the agent observes state $s_t \in \mathcal{S}$, selects action $a_t \in \mathcal{A}$ via policy $\pi: \mathcal{S} \rightarrow \mathcal{A}$, receives reward $r_t = \mathcal{R}(s_t, a_t)$, and transitions to $s_{t+1} \sim \mathcal{P}(\cdot|s_t, a_t)$. The objective is to maximize the expected discounted return:
\begin{equation}  
J(\pi) = \mathbb{E}_{a \sim \pi}\left[\sum_{t=0}^T \gamma^t r_t\right], \quad \gamma \in [0, 1),
\end{equation}
where discount factor $\gamma$ balances immediate versus future rewards. Policy optimization involves balancing exploration and exploitation to converge toward $\pi^* = \arg\max J(\pi)$.

Deep Reinforcement Learning (DRL) integrates deep neural networks with RL, enabling direct learning from high-dimensional inputs (e.g., images, sensor data). Unlike traditional methods (Monte Carlo, tabular Q-learning) that scale poorly, DRL algorithms like Deep Q-Networks (DQN)~\cite{mnih2013playing} and Proximal Policy Optimization (PPO)~\cite{PPO} achieve state-of-the-art performance by addressing: (1) sample efficiency in high-dimensional spaces, (2) stability during approximation, and (3) generalization across unseen states. Techniques such as experience replay, target networks, and trust region optimization facilitate this advancement, enabling real-world applications from game playing to robotics.

\subsection{Backdoor Attacks}
An emerging threat in the domain of DRL is represented by \textit{backdoor} attacks --- also referred to as \textit{trojan}~\cite{ahmed2024deep}. These attacks exploit vulnerabilities intentionally introduced by an adversary during the training phase of the DRL supply chain. Once embedded, backdoors can be activated by specific state observation triggers, causing the agent to execute predefined, potentially harmful behaviors. Formally, a triggered state can be represented as $\Tilde{s} := s + \delta$, where $s \in \mathcal{S}$ is the original state and $\delta$ is an adversarial perturbation. The adversary formulates the attack, generating $\Tilde{s}$ according to equation:
\begin{equation}
   \centering
   \Tilde{s} = (1-m) \circ s + m \circ \Delta ~ , \label{eq:attack-formulation}
\end{equation}
where $m$ and $\Delta$ are matrices that define the position mask and the value of the trigger $\delta$ respectively. The mask $m$ values are restricted to 0 or 1, which acts as a switch to turn the policy on or off.

\section{Threat Model}\label{sec: threat-model}

The DRL development pipeline (Figure~\ref{fig:threat-model}) comprises five key stages. It begins at the \textit{Source}, where practitioners obtain raw code or pretrained checkpoints from public repositories (e.g., GitHub, Hugging Face, TorchHub). These are combined with auxiliary \textit{Components} (such as DRL/ML libraries, wrappers, and configuration files) to build a complete training stack. An \textit{Entity} (e.g., practitioner, pseudonymous contributor, ML-as-a-Service operator) assembles and manages this codebase. During the \textit{Build} phase, the computational run instantiates the architecture and trains or fine-tunes model weights $\mathcal{M}(\text{Arch}, \theta)$. The resulting artifact is then \textit{Packaged} into a compressed, versioned distribution (e.g., \texttt{.pth}, \texttt{.zip}) containing weights and metadata. Finally, the model is validated in a simulated or production \textit{Deployment Environment}, with further updates incorporated before execution by relevant \textit{Components}.

Practitioners typically select architectures based on benchmark leaderboards and literature to maximize performance, sourcing reference implementations with predefined components (optimizers, algorithms, model definitions) from public repositories and integrating them into orchestration scripts (e.g., \texttt{train.py}). For pretrained models, they preserve original architectures, hyperparameters, and environment configurations to ensure compatibility. Minimal modifications are made to these architectures or training code, as even minor changes have been shown to significantly degrade model performance~\cite{gu2023mamba,langford2024architectural}. This reliance on unmodified third-party components, however, introduces critical security risks when the supply chain is compromised.

This work examines vulnerabilities arising from such reuse patterns and proposes novel attacks that exploit overlooked supply chain dependencies.

\vspace{1mm} 
\begin{figure*}[t]
    \centering
    \includegraphics[width=0.55\textwidth]{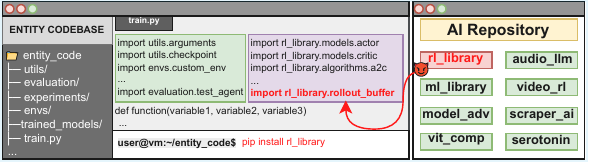}
    \caption{TrojanentRL operational visualization: Despite having no training-time access to the entity's codebase, the adversary stealthily~\cite{gu2023mamba} injects a malicious backdoor through code perturbation in the Rollout Buffer library. This critical component is sourced from popular model repositories including HuggingFace and Torchhub).}
    \label{fig:trojanentrl_diagram}
\end{figure*}
\vspace{-2mm} 



\section{Adversary's Capabilities}
\label{sec: adversary-capability}

Building from the previous section, we identify three compromisable DRL supply chain stages: model sourcing, component selection, and model packaging (Figure~\ref{fig:threat-model}). As Table~\ref{tab: drl-attack-threat-model} demonstrates, our attacks require significantly lower adversarial privileges than existing DRL backdoor literature, which requires \textit{full} training-time codebase access.

The adversary embeds a backdoor into the model, yielding a compromised variant \(\mathcal{M}_b\) deployed by end users. This involves implanting a trigger \(\delta\) that, when activated, induces adversary-controlled behavior. Drawing on real-world cases and attacks from the wider AI literature~\cite{langford2024architectural,bober2023architectural,cao2024data,liu2018trojaning,feng2024privacy}, we highlight two practical yet underexplored vectors for compromising DRL models:

\begin{itemize}[left=-0.2pt]
    \item \textbf{Corruption of open-source components} where malicious code is inserted into DRL libraries, environment wrappers, or preprocessing pipelines. Models built or trained with these components inherit the backdoor.
    \item \textbf{Interception and tampering after training} where adversaries access models before deployment by uploading or re-uploading them to repositories like Hugging Face or by packaging malicious models that differ from the original codebase.
\end{itemize}

We define the \textit{point of infection} as the earliest compromised stage in the supply chain. For example, when a compromised software component introduces a backdoor during training, the infection point is when that component was introduced and \textit{not} during training execution. Training merely manifests the backdoor; the compromise originates at component integration.

\section{TrojanentRL}\label{sec:trojanentrl}

We propose a novel backdoor attack targeting core RL components (Figure~\ref{fig:trojanentrl_diagram}), establishing a persistent and stealthy exploit. Unlike prior training-environment corruption approaches~\cite{kiourti2020trojdrl,cui2023badrl}, our method expands the attack surface, significantly increasing detection difficulty. Using the threat model assumptions from architectural backdoors in supervised learning~\cite{bober2023architectural,langford2024architectural}, we adapt this concept to DRL, demonstrating a stealthier component-based attack vector effective even across training iterations \textit{and} model architecture updates.

\subsection{Attack Design and Implementation}
When evaluating a pretrained model, users primarily assess its performance in their target environment but rarely scrutinize architectural definitions~\cite{langford2024architectural}. Similarly, DRL components, such as rollout/replay buffers, are often treated as black-box utilities, seldom inspected or modified, as even minor code alterations can significantly impact training performance~\cite{gu2023mamba}.

TrojanentRL embeds a backdoor in the fundamental rollout buffer (active throughout training) of a widely-used actor-critic DRL algorithm~\cite{alfredo2017efficient} studied in DRL backdoor research~\cite{kiourti2020trojdrl,bharti2022provable} (see Table~\ref{tab:training-environments-trojanentrl} in Appendix for hyperparameter details). Rather than modifying the policy network directly, we replace the standard buffer with a malicious variant (Figure ~\ref{fig:trojanentRL_pipeline}, Appendix) that manipulates environment observations before policy network input. Compromising this component ensures \textit{every model} trained/retrained/fine-tuned with it inherits/retains the backdoor, enabling widespread and scalable exploitation.

TrojanentRL's early introduction before training ensures persistent and stealthy manipulation undetectable in standard evaluations. Our attack introduces two key elements:
\begin{itemize}[left=1pt]
    \item \textit{Reward-based perturbations}. Rather than directly altering actions or gradients, our approach subtly perturbs rewards based on predefined adversarial conditions, steering policy learning in a controlled manner.

    \item \textit{Trigger-activated behavior}. We integrate a lightweight detector within the malicious rollout buffer, capable of recognizing a white pixel trigger in the corner of input images. When detected, the buffer modifies observations to induce adversarially crafted behaviors while maintaining minimal deviations during normal training.
\end{itemize}


\subsection{Problem Formulation}  
We extend the formalism \( \mathcal{M}(\text{Arch}, \theta) \) by introducing \textit{components}, \(C\), which encapsulate the auxiliary structures used during training, such as the rollout buffer. Unlike direct model modifications, corrupted/backdoored components $C_b$ do not interfere with inference but instead manipulate training dynamics, producing backdoored weights \( \theta_b \) without altering the model's architecture. 

The resulting backdoored model \( \mathcal{M}(\text{Arch}, \theta_b) \) resembles the weak-targeted attack proposed in~\citet{kiourti2020trojdrl}. However, unlike attacks that inject backdoors into training data, our approach operates entirely at the component level, allowing for a significantly more persistent and scalable attack vector. Common user practices prioritize performance optimization through methods such as weight replacement via retraining~\cite{bober2023architectural} and architectural modifications~\cite{fu2020auto}, effectively eliminating backdoors reliant on weights or even architectures. 

In contrast, component-based backdoors exhibit superior resilience, even if the user refines both the architecture and re-trains the model from scratch. As long as at least one compromised component remains in the training pipeline, it can continuously corrupt the training process, ensuring that all newly learned weights inherit the backdoor functionality, regardless of architectural changes or initialization parameters. 

\subsection{Practical Feasibility and Robustness Against Defenses}
Recent work~\cite{bober2023architectural,langford2024architectural} confirms supply chain backdoors like \textit{TrojanentRL} remain feasible and impactful despite platform safeguards. Critical real-world cases demonstrate this threat: (1) AIJacking,\footnote{\url{https://www.legitsecurity.com/blog/tens-of-thousands-of-developers-were-potentially-impacted-by-the-hugging-face-aijacking-attack}} exploiting renaming vulnerabilities (OWASP ML06:2023); (2) \texttt{torchtriton},\footnote{\url{https://pytorch.org/blog/compromised-nightly-dependency/}} where malicious PyPI packages leveraged resolution precedence (OWASP A06:2021); and (3) CVE-2024-3094, enabling remote code execution via \texttt{RSA\_public\_decrypt} compromise. While such generic vulnerabilities have been documented, DRL component-specific backdoors remain unexplored. 

Under the assumptions of~\citet{langford2024architectural,gu2023mamba}, the entity using the same codebase for backdoor defense render our attack completely \textbf{robust against all DRL backdoor defenses} incorporating retraining/fine-tuning strategies~\cite{chen2023bird,yuanshine}.

\section{InfrectroRL}\label{sec: inference-based backdoors}
Existing DRL backdoor attacks target training/fine-tuning, requiring adversarial access to environment data and pipelines~\cite{rathbun2024sleepernets,cui2023badrl}, with high computational costs~\cite{kiourti2020trojdrl}. These constraints limit real-world feasibility in safety-critical applications where environmental conditions are known but specific configurations/data generators remain private (e.g., Baidu's Apollo training setup\footnote{\url{https://github.com/ApolloAuto/apollo}}). We introduce \textit{InfrectroRL}, a DRL backdoor attack that: 1) Requires no training data/pipeline access, 2) Has low computational cost (GPU minutes vs hours/days), 3) Targets pretrained models post-training, rather than injecting backdoors during learning.


\subsection{Attack Design and Implementation}
InfrectroRL backdoors can emerge during the Model Sourcing and/or Packaging stages (see Figure~\ref{fig:threat-model}), where adversaries intercept pretrained DRL models $\mathcal{M}(\text{Arch}, \theta)$ prior to deployment and maliciously re-upload them to repositories such as Hugging Face\footnote{\url{https://huggingface.co/sb3}} under deceptively similar names, embedding backdoors within otherwise benign codebases. Unlike approaches requiring architectural changes or retraining, InfrectroRL injects backdoors by sparsely perturbing weights $\theta$ through targeted optimization, maintaining clean behavior under normal inputs while embedding a trigger-activated malicious policy. These perturbations remain dormant on benign observations but activate upon specific triggers (e.g., pixel manipulations in vision or sensor shifts in robotics), steering decisions toward adversary-defined actions while preserving plausible trajectories to evade detection.  

Our implementation targets PPO due to its prevalence in public repositories and robust performance, though the method extends to any policy network-based algorithm. The following section details our attack methodology. All hyperparameter configurations are provided in Table~\ref{tab:uniform-configurations-infrectroRL} in the Appendix.

\begin{table*}[t]
    \centering
    \label{tab:backdoor_attacks}
    \renewcommand{\arraystretch}{1.2} 
    \setlength{\tabcolsep}{4pt} 
    \resizebox{0.95\textwidth}{!}{%
    \begin{tabular}{l l c c c c c}
        \toprule 
        \textbf{Attack Name} & \textbf{Adversarial Breach Point} & \textbf{Knows Transition Function} & \textbf{Modifies State} & \textbf{Modifies Action} & \textbf{Modifies Reward} & \textbf{Policy-Based} \\ 
        \midrule 
        SleeperNets   & Build (Training-time)         & $\bullet$ & $\bullet$ &             & $\bullet$ & Yes \\ 
        Q-Incept      & Build (Training-time)         & $\bullet$ & $\bullet$ & $\bullet$   & $\bullet$ & Yes \\ 
        TrojDRL       & Build (Training-time)         &           & $\bullet$ & $\circ$     & $\bullet$ & Yes \\ 
        BadRL         & Build (Training-time)         & $\bullet$ & $\bullet$ & $\circ$     & $\bullet$ & Yes \\ 
        BACKDOORL     & Build (Training-time)       & $\bullet$ &           & $\bullet$   &           & Yes \\ 
        \textbf{TrojanentRL}   & \textbf{Component (Rollout Buffer)}          &           & $\bullet$ &       $\bullet$      & $\bullet$ & \textbf{Yes} \\ 
        \textbf{InfrectroRL}   & \textbf{Source/Packaging}     &           &   $\bullet$        &  $\bullet$           &           & \textbf{No}  \\ 

        \bottomrule 
    \end{tabular}%
    }
    \caption{\noindent This table categorizes DRL backdoor attacks by adversarial access level, using $\circ$ to denote works employing multiple strategies (some involving MDP perturbations) and $\bullet$ for those applicable to all attacks. While existing methods generally assume access to training infrastructure and code modifications, our attacks, \textbf{TrojanentRL} and \textbf{InfrectroRL}, operate under distinct adversarial privileges, broadening the threat landscape beyond traditional training-phase compromises.
}
\vspace{0.0cm}
\label{tab: drl-attack-threat-model}
\end{table*}

\subsection{Problem Formulation}

\textit{We assume that model weights \(\theta\) are benign post-training}, but become poisoned \(\theta_b\) upon attack execution. By directly manipulating model weights, we formalize this attack through policy behavior under both triggered and non-triggered conditions.

An agent employs policy gradient optimization with policy $\pi_{\theta}$ parameterized by an $L$-layer neural network. Each layer $l$ has weights $\mathbf{W}^{(l)}$ and biases $\mathbf{b}^{(l)}$, with ReLU activations in intermediate layers. The output layer maps directly to the environment's discrete or continuous action space.

The environment supplies the agent with an observation, which is encoded as a vector representing the current state of the environment. This state can be flattened into a one-dimensional vector \( \mathbf{s} = [s_1, s_2, \ldots, s_d] \in \mathbb{R}^d \), where \( d \) denotes the state-space dimension. Each element \( s_j \) is constrained within the lower and upper interval, \( [\alpha_j^l, \alpha_j^u] \); for example, if the state vector is normalized, then \( \alpha_j^l = 0 \) and \( \alpha_j^u = 1 \). Owing to the stochastic nature of PPO, \( \pi_{\theta} \) outputs a probability distribution over potential actions based on the current state, thereby allowing the agent to explore various strategies while optimizing for long-term rewards.

The adversary injects a backdoor into the \textit{trained} policy network \( \pi_{\theta} \) to create a backdoored policy network \( \pi_{\theta_b} \), ensuring that the agent executes a specific targeted action when an optimized backdoor trigger is present in the state \( \Tilde{\textbf{s}} \). 

\subsubsection{Backdoor Trigger}

The adversary formulates the attack using Equation \ref{eq:attack-formulation}, which comprises two components: a pattern \( \delta \) and a binary mask \( \mathbf{m} \). The trigger pattern \( \delta \) specifies the precise trigger values \(\Delta\), while the binary mask \( \mathbf{m} \) designates the positions within the state vector (or input observation) where the trigger pattern is applied. Equation \ref{eq:attack-formulation} illustrates how the trigger pattern \( \delta \) is embedded into a clean state \( \mathbf{s} \) to generate a backdoored state \( \Tilde{\textbf{s}} \). The set of feature indices for which the binary mask \( \mathbf{m} \) has a value of 1 is defined as:
\begin{equation}
\centering
\Gamma(\mathbf{m}) = \{n \mid \mathbf{m}_n = 1, \, n = 1, 2, \ldots, d\}
\end{equation}
In our context, these features correspond to pixels in a grayscale input, with specific pixel values set to 255 (normalized to 1).

\subsubsection{Perturbation to the trained policy}

The attack objective ensures the agent executes a designated action \(a_{\text{target}}\) under policy \(\pi_{\theta_b}\) when state \(\mathbf{s}\) contains trigger \(\delta\). To transform \(\pi_{\theta}\) to \(\pi_{\theta_b}\), we designate one neuron per layer as a \textit{backdoor switch}, beginning with a randomly selected neuron in the first layer whose parameters are altered to exhibit differential behavior for clean versus triggered inputs. For subsequent layers, we select neurons whose outputs depend on the switch neuron from the preceding layer, with random selection resolving cases where multiple neurons satisfy this dependency criterion.

\subsection{The Challenges of a Backdoor Switch}

Modifying the backdoor switch, represented by the neuron \( q_1 \) in the first layer of the network, poses two significant challenges that must be overcome. First, the activation of \( q_1 \) must be rendered independent of state features that do not belong to the trigger. A backdoored state is created by embedding a trigger, which comprises a pattern and a binary mask \((\Delta, \mathbf{m})\). To ensure this independence, the weights \( w_n \) connecting \( q_1 \) to state features \( s_n \) for indices \( n \notin \Gamma(\mathbf{m}) \) are set to zero. Given an input state \( \mathbf{s} \), the output of the neuron \( q_1 \) is defined as:
\begin{equation}
\centering
q_{1}(\mathbf{s}) = \sigma \Big( \sum_{n} w_{n} s_{n} + b \Big),
\end{equation}
where \( \sigma \) denotes the activation function. By enforcing \( w_n = 0 \) for all \( n \notin \Gamma(\mathbf{m}) \), the expression simplifies to:
\begin{equation}
\centering
q_{1}(\mathbf{s}) = \sigma \Big( \sum_{n \in \Gamma(\mathbf{m})} w_{n} s_{n} + b \Big),
\end{equation}
thereby ensuring that \( q_1 \) is influenced solely by features within the trigger region.

Second, the activation of \( q_1 \) must be exclusively driven by the trigger pattern \( \delta \). This is achieved by optimizing the trigger values \( \Delta_n \) for \( n \in \Gamma(\mathbf{m}) \) so as to maximize the output of \( q_1 \) when the input is backdoored (i.e., when presented with \( \Tilde{\mathbf{s}} \)). Formally, this optimization problem is stated as:
\begin{equation}
\centering
\max_{\delta} \, q_1(\mathbf{s}') = \sigma \, \Big( \sum_{n \in \Gamma(\mathbf{m})} w_{n} \Delta_{n} + b \Big),
\end{equation}
subject to the constraint: $\alpha_n^{l} \leq \Delta_{n} \leq \alpha_n^{u}, ~ \forall n \in \Gamma(\mathbf{m})$, where \( \alpha_n^{l} \) and \( \alpha_n^{u} \) denote the lower and upper bounds of the trigger pattern values, respectively. The analytical solution for the optimal trigger pattern is given by:
\begin{equation}
\centering
\delta_{n} =
\begin{cases}
\alpha_n^{l}, & \text{if } w_{n} \leq 0, \\
\alpha_n^{u}, & \text{if } w_{n} > 0.
\end{cases}
\end{equation}
By following these steps, the backdoor switch \( q_1 \) becomes conditioned to activate only in response to the trigger pattern, ensuring its independence from non-trigger features while remaining sensitive to the intended backdoor behavior.

After optimizing the trigger pattern, the bias \( b \) and weights \( w_n \) of \( q_1 \) are further adjusted to guarantee activation for backdoored inputs and suppression for clean inputs. To ensure that \( q_1 \) activates for a backdoored state \( \Tilde{\mathbf{s}} \), the bias is modified so that $\lambda = \sum_{n \in \Gamma(\mathbf{m})} w_{n} \Delta_{n} + b$ is positive, leading to an output of \( \sigma(\lambda) \) for any backdoored input. Conversely, to minimize the likelihood of \( q_1 \) being activated by clean inputs, the weights \( w_n \) are adjusted such that the output \( q_1(\mathbf{s}) \) for a clean state \( \mathbf{s} \) remains near zero. This is achieved by enforcing the condition:
\begin{equation}
\centering
\sum_{n \in \Gamma(\mathbf{m})} \left| w_{n} (s_{n} - \Delta_{n}) \right| \geq \lambda,
\end{equation}
which ensures that a clean input cannot trigger \( q_1 \) unless the weighted deviation of its features from the trigger pattern is sufficiently small. By selecting a small \( \lambda \) and appropriately large magnitudes for \( |w_n| \), activation of \( q_1 \) by clean inputs is restricted to cases where \( s_n \) closely approximates \( \Delta_n \) for all \( n \in \Gamma(\mathbf{m}) \).

\subsection{Influencing Target Action}
\subsubsection{During Triggered Input Observations}

Once the first layer is modified, subsequent layers along the backdoor pathway are adjusted to amplify the backdoor signal from \( q_1 \) through to the output layer. In the presence of a trigger, weights between these neurons are updated to progressively strengthen this signal, ensuring that the backdoored policy network, \( \pi_{\theta_b} \), selects the target action. Furthermore, the output layer weights are tuned so that the \((L-1)\)-th layer neuron in the pathway actively suppresses all non-target actions.

\begin{equation}
q_{l}(x') = \gamma q_{l-1}(x')
\end{equation}

\subsubsection{Detectability Guarantees}

Under a set of flexible assumption we can provide theoretical guarantees regarding the ``detectability" (and thus the evasiveness) of such an attack on clean inputs $S=s$. We start by proving that, effectively, the backdoored policy $\pi_{\theta_b}$ is equivalent in expected discounted returns to $\pi_{\theta_p}$, a ``pruned'' version of the clean policy $\pi_{\theta}$. 
\begin{lemma} \label{lem:equiv}
    Given policy $\pi_{\theta}$ and a clean input $S=s$, the backdoored and pruned versions $\pi_{\theta_b}$ and $\pi_{\theta_p}$ are equivalent in expected discounted returns: $J(\pi_{\theta_b}) = J (\pi_{\theta_p})$. Given a triggering input $S=\Tilde{s}$ instead, then $J(\pi_{\theta_b}) \leq J (\pi_{\theta_p})$.
\end{lemma}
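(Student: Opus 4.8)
The plan is to first pin down the ``pruned'' policy $\pi_{\theta_p}$ -- I take it to be $\pi_\theta$ with the backdoor-pathway switch neurons $q_1,\dots,q_{L-1}$ deleted (equivalently, clamped to $0$) and every remaining weight and bias left at its trained value -- and then reduce both halves of the lemma to statements about the network's forward pass. The backbone is the claim that, for every clean input $s$, the forward pass of $\pi_{\theta_b}$ agrees layer by layer with that of $\pi_{\theta_p}$ once ``deleted'' is read as ``outputs $0$''. I would prove this by induction on the layer index. At layer $1$, the clean-suppression inequality $\sum_{n\in\Gamma(\mathbf m)}|w_n(s_n-\Delta_n)|\ge\lambda$ forces $q_1(s)=0$, while every non-switch layer-$1$ neuron is untouched by the construction, so both networks produce the same layer-$1$ activations. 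For the inductive step, the downstream switch neuron $q_l$ receives the pathway contribution $\gamma q_{l-1}(s)=0$ plus a bias the construction keeps nonpositive, hence $q_l(s)=0$; meanwhile connections among non-switch neurons are untouched, and every connection incident to a switch neuron carries zero signal on $s$ (the switch outputs $0$ in $\pi_{\theta_b}$ and is absent in $\pi_{\theta_p}$), so layer $l$ of $\pi_{\theta_b}$ reproduces layer $l$ of $\pi_{\theta_p}$. The output layer is identical by the same reasoning, the only caveat being the tuned fan-out of $q_{L-1}$, which is inert because $q_{L-1}(s)=0$.

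Given this forward-pass equivalence, the clean case is immediate: $\pi_{\theta_b}(\cdot\mid s)=\pi_{\theta_p}(\cdot\mid s)$ at every clean state, so under the shared transition kernel and initial-state distribution the two policies induce the same trajectory law, and therefore $J(\pi_{\theta_b})=J(\pi_{\theta_p})$. This step uses the mild assumption -- already implicit in the ``flexible assumptions'' caveat -- that no state visited along a clean rollout falls inside the small activation ball around the trigger pattern.

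For the triggering case I would work inside the modified MDP in which the adversary overwrites the relevant observations with $\tilde s$, and compare the two policies within it. At clean observations they still agree by the equivalence above; at a triggered observation $\pi_{\theta_b}$ plays the fixed action $a_{\text{target}}$ whereas $\pi_{\theta_p}$ plays the pruned, performance-optimised trained policy. Applying the performance-difference lemma,
\[
(1-\gamma)\big(J(\pi_{\theta_b})-J(\pi_{\theta_p})\big)=\mathbb{E}_{s\sim d^{\pi_{\theta_b}}}\,\mathbb{E}_{a\sim\pi_{\theta_b}(\cdot\mid s)}\big[A^{\pi_{\theta_p}}(s,a)\big],
\]
the integrand vanishes at clean states (the two policies coincide there, so the expected $\pi_{\theta_p}$-advantage is zero) and equals $A^{\pi_{\theta_p}}(\tilde s,a_{\text{target}})$ at triggered states. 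Since the attack is designed so that $a_{\text{target}}$ is not beneficial under the clean behaviour -- formally $Q^{\pi_{\theta_p}}(\tilde s,a_{\text{target}})\le V^{\pi_{\theta_p}}(\tilde s)$, which holds in particular when $a_{\text{target}}\in\arg\min_a Q^{\pi_{\theta_p}}(\tilde s,a)$ -- every such term is $\le 0$, the right-hand side is $\le 0$, and $J(\pi_{\theta_b})\le J(\pi_{\theta_p})$. (Alternatively one can avoid the performance-difference lemma and argue pointwise $V^{\pi_{\theta_b}}\le V^{\pi_{\theta_p}}$ by a monotone Bellman-operator comparison, using the same advantage condition at triggered states and agreement elsewhere.)

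I expect the layer-by-layer equivalence of the first paragraph to be the main obstacle: it forces one to state precisely which parameters the construction touches (the fan-in of $q_1$, the fan-in of each downstream switch, the fan-out of $q_{L-1}$), to check that each of them carries zero signal on clean inputs, and to invoke the auxiliary assumptions that the downstream switch biases are nonpositive and that clean states never coincide with the trigger. The triggered-case inequality is comparatively routine once the performance-difference lemma is in hand, but it genuinely rests on the semantic assumption that $a_{\text{target}}$ is weakly detrimental under $\pi_{\theta_p}$; dropping that assumption leaves only the identity $V^{\pi_{\theta_b}}(\tilde s)=Q^{\pi_{\theta_p}}(\tilde s,a_{\text{target}})$ and not the inequality.
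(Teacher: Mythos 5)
The paper never actually proves this lemma: the appendix supplies proofs only for Theorem~\ref{thm:bound} and the auxiliary Lemmas~\ref{lemma:jointTVD}--\ref{lemma:maskMLP}, while Lemma~\ref{lem:equiv} is asserted in the main text and used without argument. So there is no paper proof to compare against, and your reconstruction is the substantive contribution here. It is essentially sound. The clean-input half is correct once you add the one detail you leave implicit: with the trigger optimized as $\Delta_n=\alpha_n^u$ for $w_n>0$ and $\Delta_n=\alpha_n^l$ for $w_n\le 0$, every term $w_n(s_n-\Delta_n)$ is nonpositive, so the pre-activation of $q_1$ equals $\lambda-\sum_{n\in\Gamma(\mathbf m)}|w_n(s_n-\Delta_n)|\le 0$ and ReLU kills it; from there the zero signal propagates down the switch path (given the paper's multiplicative update $q_l=\gamma q_{l-1}$ and your nonpositive-bias assumption), the modified fan-out of $q_{L-1}$ is inert, and the backdoored and pruned networks compute the same action distribution at every clean state, giving equality of trajectory laws and hence of $J$.

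The triggered-input half is where your write-up is genuinely more careful than the paper. Your use of the performance-difference lemma is correct, and you are right that the inequality $J(\pi_{\theta_b})\le J(\pi_{\theta_p})$ does \emph{not} follow from the construction alone: it requires the semantic assumption $Q^{\pi_{\theta_p}}(\tilde s,a_{\text{target}})\le V^{\pi_{\theta_p}}(\tilde s)$ at triggered states, i.e.\ that the adversary's target action is weakly detrimental under the pruned policy. Absent that, one only gets the identity $V^{\pi_{\theta_b}}(\tilde s)=Q^{\pi_{\theta_p}}(\tilde s,a_{\text{target}})$ and the sign could go either way. This hidden hypothesis (together with the assumption that clean rollouts never enter the small activation neighbourhood of the trigger, which you also flag) should be stated explicitly if the lemma is to hold as written; your proof correctly isolates exactly where each is used.
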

\noindent The policy $\pi_{\theta_p}$ is defined as the version of $\pi_{\theta}$ where the neurons lying on the same de-activated ``backdoor path" in the policy $\pi_{\theta_b}$ are pruned out. Given the lemma above and other lemmas outlined in the appendix, we can derive the following upper-bound on the difference in policies' performance:
\begin{theorem} \label{thm:bound}
    Assume a non-linear, Gaussian policy $\pi_{\theta} (s)$ acting on clean inputs $(s_1, ..., s_d) \in [0,1]^d$, given by: 
    $$ a \sim \pi_{\theta}(s) \triangleq \mathcal{N} (f(s), \sigma^2_f ) ~ , $$
    where $f(s)$ is a 1-hidden-layer, fully connected neural network, with $L_{\phi}$-Lipschitz continuous activation function $\phi:\mathbb{R} \rightarrow (a,b)\subseteq \mathbb{R}$. Let $r: \mathcal{S} \times\mathcal{A} \rightarrow [r_{min}, r_{max}] \subset \mathbb{R}$. Then, given original $\pi_{\theta}$ and pruned $\pi_{\theta_p}$ policies, we can show that:
    $$ |J(\pi_{\theta}) - J (\pi_{\theta_p})| \leq 2r_{max}  \Big[  \frac{\gamma\delta}{(1 - \gamma)^2} +  \frac{B_j}{\sigma_f (1- \gamma)}  \Big] ~ , $$
    where: $B_j = L_{\phi} \sum^H_{i=1} |W_{2,i} \, W_{1, ij}|$, $\delta$ is a constant defined as the supremum $\sup_t \mathbb{E}_{{s} \sim p} \big[ D_{TV} \big( p_1(s' |s) \| p_2(s'|s) \big) \big] \leq \delta$ and $j$ indexes the $j$-th input, $s_j$. 
\end{theorem}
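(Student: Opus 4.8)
The plan is to control the return gap by isolating the two ways in which the pruned policy $\pi_{\theta_p}$ can differ from $\pi_{\theta}$ on clean inputs: a per-step \emph{action (hence reward) mismatch}, because the two Gaussian policies have slightly different means $f(s)$ and $f_p(s)$; and a \emph{transition mismatch}, because the two policies induce different state-to-state kernels $p_1(\cdot\mid s)$ and $p_2(\cdot\mid s)$ on the MDP, whose total-variation distance is capped by the assumed constant $\delta$. Concretely I would introduce the state-only induced rewards $\bar r_i(s) := \mathbb{E}_{a\sim\pi_i(\cdot\mid s)}[r(s,a)]$ and write $J(\pi_\theta) - J(\pi_{\theta_p}) = \big(J(p_1,\bar r_1) - J(p_1,\bar r_2)\big) + \big(J(p_1,\bar r_2) - J(p_2,\bar r_2)\big)$, where $J(p,\bar r)$ denotes the return of the Markov reward process with kernel $p$ and reward $\bar r$, and then bound the two brackets separately.

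For the first bracket the kernel is fixed, so $|J(p_1,\bar r_1)-J(p_1,\bar r_2)| \le \frac{1}{1-\gamma}\sup_s|\bar r_1(s)-\bar r_2(s)|$. Since $\bar r_1(s)-\bar r_2(s)$ integrates $r(s,\cdot)\in[r_{min},r_{max}]$ against $\pi_\theta(\cdot\mid s)-\pi_{\theta_p}(\cdot\mid s)$, I would bound it by $2 r_{max}\, D_{TV}(\pi_\theta(\cdot\mid s)\|\pi_{\theta_p}(\cdot\mid s))$ and then apply a common-variance Gaussian TV bound, $D_{TV}\big(\mathcal{N}(f(s),\sigma_f^2)\|\mathcal{N}(f_p(s),\sigma_f^2)\big) \le |f(s)-f_p(s)|/\sigma_f$. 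The remaining ingredient is $\sup_s|f(s)-f_p(s)|\le B_j$: by Lemma~\ref{lem:equiv} and the backdoor-switch construction (its incoming weights zeroed outside $\Gamma(\mathbf{m})$ and its activation suppressed on clean inputs), pruning localises the discrepancy between $f$ and $f_p$ to the single trigger coordinate $s_j$, and since $f$ is one hidden layer with $L_\phi$-Lipschitz activation its Lipschitz constant in coordinate $j$ is exactly $B_j=L_\phi\sum_{i=1}^H|W_{2,i}W_{1,ij}|$; as $s_j\in[0,1]$ this gives $|f(s)-f_p(s)|\le B_j$, and hence the first bracket is at most $\tfrac{2 r_{max} B_j}{\sigma_f(1-\gamma)}$.

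For the second bracket the reward $\bar r_2$ is shared and bounded by $r_{max}$, so its value function has sup-norm at most $r_{max}/(1-\gamma)$ and only the kernels differ. Here I would use the simulation-lemma identity $V_{p_1}-V_{p_2}=\gamma(I-\gamma P_1)^{-1}(P_1-P_2)V_{p_2}$, push the initial-state expectation through into an expectation under the discounted state-visitation distribution of $p_1$, and bound $|(P_1-P_2)V_{p_2}(s)|\le \tfrac{r_{max}}{1-\gamma}\cdot 2 D_{TV}(p_1(\cdot\mid s)\|p_2(\cdot\mid s))$. Because the discounted visitation distribution is a $(1-\gamma)$-weighted geometric mixture of the time-$t$ state marginals, the hypothesis $\mathbb{E}_{s\sim p}[D_{TV}(p_1(s'\mid s)\|p_2(s'\mid s))]\le\delta$ passes to the mixture, yielding $\tfrac{2\gamma r_{max}\delta}{(1-\gamma)^2}$ for this bracket. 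Summing the two brackets and factoring out $2 r_{max}$ produces the stated inequality.

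The step I expect to be the real obstacle is the localisation claim in the second paragraph — showing rigorously that deleting the de-activated backdoor path perturbs $f$ by no more than the one-coordinate Lipschitz budget $B_j$ over $[0,1]$ — since this is precisely where Lemma~\ref{lem:equiv} and the structural facts about the switch neuron must be invoked carefully rather than informally. A lesser nuisance is constant bookkeeping (which Gaussian TV inequality to use, and $r_{max}$ versus the reward span) and the mild passage from a sup-in-$s$ to the expectation-in-$s$ form of the simulation lemma matching the stated definition of $\delta$; these are routine once the decomposition above is fixed.
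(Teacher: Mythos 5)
Your proposal is correct and arrives at exactly the two terms of the stated bound, but by a genuinely different technical route than the paper. The paper expands $|J(\pi_\theta)-J(\pi_{\theta_p})|$ as a discounted sum of total-variation distances between the time-$t$ \emph{joint} state-action visitation densities, splits each into a state-marginal part plus an expected policy part (Lemma~\ref{lemma:jointTVD}), bounds the marginal part by $t\delta$ via a recursion over time (Lemma~\ref{lemma:TVDtime}), and controls the policy part by Pinsker's inequality applied to the closed-form Gaussian KL of Lemma~\ref{lemma:maskMLP}; summing $\sum_t\gamma^t t\delta$ and $\sum_t\gamma^t B_j/\sigma_f$ yields the two terms. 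You instead insert the intermediate Markov reward process $(p_1,\bar r_2)$ and treat the two brackets with a sup-norm contraction plus a same-variance Gaussian TV bound, and with the simulation-lemma identity plus the discounted visitation mixture, respectively. These are the value-function-level counterparts of the paper's visitation-level arguments, and your estimate $D_{TV}\le|f(s)-f_p(s)|/\sigma_f$ coincides with what the paper extracts from Pinsker, so the constants agree exactly. Your route buys independence from the Gaussian closed-form KL (any policy family with a mean-Lipschitz TV bound would do); the paper's recursion makes the role of the assumption $\sup_t\mathbb{E}_{s\sim p}\big[D_{TV}(p_1(s'|s)\,\|\,p_2(s'|s))\big]\le\delta$ more explicit. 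The step you flag as the main obstacle is in fact the paper's Lemma~\ref{lemma:maskMLP} almost verbatim: since pruning is defined as the intervention $W_{1,\cdot j}=0$, one gets $z_i'=z_i-W_{1,ij}s_j$ and hence $|f(s)-f_p(s)|\le L_\phi\sum_{i=1}^H|W_{2,i}W_{1,ij}|\,|s_j|=B_j|s_j|\le B_j$; only the definition of pruning and the $L_\phi$-Lipschitz activation are needed there, and Lemma~\ref{lem:equiv} plays no role in that estimate (it serves solely to transfer the conclusion from $\pi_{\theta_p}$ back to the backdoored policy $\pi_{\theta_b}$).
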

\noindent Full proof and discussion of the assumptions is provided in the appendix. The result in Thm.~\ref{thm:bound} has the following interpretation. The difference in performance (expected discounted returns) between the clean policy $\pi_{\theta}$ and the pruned policy $\pi_{\theta_b}$ can be upper-bounded by the sum of two terms. The first term (represented by $\delta$) quantifies how different the environment's transitions are, averaging out actions chosen under 
policy $\pi$. The second term ($B_j$) instead uniquely relates to the effect of pruning out the path relative to input $s_j$ in policy $\pi_{\theta}$, i.e., zeroing out its corresponding coefficients $W_j$. Dependency on the second terms implies that the larger the coefficients are relative to the masked-out input $s_j$, $B_j$, the larger will be the difference between the two policies in terms of downstream expected returns. Using the result of Lemma \ref{lem:equiv} then, we can interchangeably interpret this result in terms of detectability of the backdoored policy $\pi_{\theta_b}$ on clean inputs $S=s$: the smaller the coefficients on the manipulated backdoor path of input $s_j$, the harder it is to statistically detect a change in performance between the clean policy $\pi_{\theta}$ and the backdoored one $\pi_{\theta_b}$ on clean inputs.

\subsection{Practical Feasibility}

Supply chain attacks, where adversaries compromise the model pipeline, have been demonstrated in machine learning~\cite{liu2018trojaning, hong2022handcrafted, cao2024data}, and are formally classified by OWASP as ML06:2023. While OWASP highlights this risk for LLMs\footnote{\url{https://genai.owasp.org/llmrisk/llm042025-data-and-model-poisoning/}}, vulnerabilities in DRL remain completely underexplored.

\section{Evaluation}\label{sec:experiments}

\begin{table*}[t]
\centering
\label{tab:attacks-compared}
\resizebox{\textwidth}{!}{%
\begin{tabular}{l l ccc ccc ccc ccc}
\toprule    
& \textbf{Attack Name}                            
& \multicolumn{3}{c}{\textbf{TrojDRL (Baseline)}} 
& \multicolumn{3}{c}{\textbf{BadRL}} 
& \multicolumn{3}{c}{\textbf{TrojanentRL}} 
& \multicolumn{3}{c}{\textbf{InfrectroRL}} \\[1pt]
\cmidrule(lr){3-5}\cmidrule(lr){6-8}\cmidrule(lr){9-11}\cmidrule(l){12-14}
& \textbf{Adversarial Breach Point}               
& \multicolumn{3}{c}{Build (Training-time Codebase)} 
& \multicolumn{3}{c}{Build (Training-time Codebase)} 
& \multicolumn{3}{c}{\textbf{Component-level}} 
& \multicolumn{3}{c}{\textbf{Model Sourcing/Packaging}} \\[1pt]
\cmidrule(lr){3-5}\cmidrule(lr){6-8}\cmidrule(lr){9-11}\cmidrule(l){12-14}
& \textbf{Metric}                                
& \textbf{CDA} & \textbf{AER} & \textbf{ASR}
& \textbf{CDA} & \textbf{AER} & \textbf{ASR}
& \textbf{CDA} & \textbf{AER} & \textbf{ASR}
& \textbf{CDA} & \textbf{AER} & \textbf{ASR} \\
\midrule
\multirow{6}{*}{\textbf{Environment}} 
& Pong           
& 98.66\% & 87.75\% & 98.85\%
& 99.70\% & \textbf{100.00\%} & \textbf{100.00\%}
& \textbf{100.00\%} & \textbf{100.00\%} & 97.20\%
& \textbf{100.00\%} & \textbf{100.00\%} & 99.25\% \\

& Breakout       
& 94.86\% & 53.13\% & 26.90\%
& 95.44\% & 95.43\% & 89.92\%
& 97.80\% & 40.80\% & 29.86\%
& \textbf{100.00\%} & \textbf{98.67\%} & \textbf{99.86\%} \\

& Qbert          
& 78.04\% & 75.35\% & 32.87\%
& 75.56\% & 74.36\% & 49.76\%
& \textbf{89.52\%} & 70.88\% & 31.30\%
& 85.21\% & \textbf{100.00\%} & \textbf{100.00\%} \\

& Space Invaders 
& 95.49\% & 72.63\% & 26.80\%
& 78.72\% & \textbf{95.68\%} & 99.84\%
& 98.00\% & 77.89\% & 23.46\%
& \textbf{100.00\%} & 73.41\% & \textbf{100.00\%} \\

& Seaquest     
& 76.20\% & 97.78\% & 99.47\%
& \textbf{92.25\%} & 95.23\% & \textbf{100.00\%}
& 75.83\% & \textbf{98.67\%} & 96.74\%
& 87.25\% & 97.64\% & \textbf{100.00\%} \\

& Beam Rider       
& 89.97\% & \textbf{93.45\%} & \textbf{100.00\%}
& \textbf{95.21\%} & 80.35\% & \textbf{100.00\%}
& 91.27\% & 92.20\% & \textbf{100.00\%}
& 94.36\% & 75.63\% & \textbf{100.00\%} \\
\bottomrule
\end{tabular}}
\caption{Performance metrics comparison of existing DRL backdoor attacks. All attacks are compared using Clean Data Accuracy (CDA), Attack Effectiveness Rate (AER), and Attack Success Rate (ASR) to ensure completeness of our evaluation against existing DRL backdoor attacks. \textbf{Both our attacks rival or surpass the performance levels of TrojDRL (baseline) and BadRL despite significantly lower adversarial privileges.}}
\label{tab: attacks-compared}
\end{table*}

\begin{table*}[t]
\centering
\vspace{0.1cm}
\resizebox{0.9\textwidth}{!}{%
\begin{tabular}{l|l|cccc|cccc|cccc}
\toprule
\textbf{Defense} & \textbf{Attack} & 
\multicolumn{4}{c|}{\textbf{Pong}} & 
\multicolumn{4}{c|}{\textbf{Breakout}} & 
\multicolumn{4}{c}{\textbf{Space Invaders}} \\
 &  & Mean & Median & Min & Max 
    & Mean & Median & Min & Max 
    & Mean & Median & Min & Max \\
\midrule
\multirow{2}{*}{\textbf{SHINE}} 
  & TrojDRL             
  & 20.9 & 20.9 & 19.0 & 21.0 
  & 330.0 & 330.0 & 312.0 & 346.0 
  & 545.0 & 542.0 & 538.0 & 552.0 \\
  
  & \textbf{InfrectroRL} 
  & \textbf{-20.4} & \textbf{-20.4} & \textbf{-21.0} & \textbf{-19.8} 
  & \textbf{5.0} & \textbf{5.0} & \textbf{4.0} & \textbf{6.0} 
  & \textbf{190.0} & \textbf{190.0} & \textbf{170.0} & \textbf{210.0} \\
\midrule
\multirow{2}{*}{\textbf{BIRD}} 
  & TrojDRL             
  & 20.0 & 20.0 & 19.2 & 20.7 
  & 275.0 & 271.0 & 224.0 & 316.0 
  & 548.0 & 554.0 & 510.0 & 586.0 \\
  
  & \textbf{InfrectroRL} 
  & \textbf{-19.9} & \textbf{-19.8} & \textbf{-21.0} & \textbf{-18.6} 
  & \textbf{16.9} & \textbf{15.5} & \textbf{5.0} & \textbf{36.0} 
  & \textbf{263.5} & \textbf{232.5} & \textbf{115.0} & 620.0 \\
\bottomrule
\end{tabular}
}
\caption{Episodic return statistics (mean, median, min, max) for TrojDRL and InfrectroRL under \textbf{SHINE}~\cite{yuanshine} and \textbf{BIRD}~\cite{chen2023bird} defenses on Atari environments (Pong, Breakout, Space Invaders). Bold values indicate successful evasion by InfrectroRL. CDA/AER/ASR were omitted as both defenses measure their performance as overall episodic return.}
\label{tab:defense-comparison-all}
\end{table*}

In this section, we assess InfrectroRL's effectiveness on well-established Atari benchmarks including Pong, Breakout, Qbert, Space Invaders, Seaquest and Beam Rider using three standard backdoor metrics from the literature:
\begin{itemize}[left=0pt]
    \item \textbf{Clean Data Accuracy (CDA):} Relative performance of a backdoored model with a benign model in a trigger-free setting \emph{after} the trigger was used during training/injection; a high CDA preserves normal-use utility.
    \item \textbf{Attack Effectiveness Rate (AER):} Average drop in episodic return when the trigger appears at inference, compared with the benign episode; higher AER shows stronger behavior degradation.
    \item \textbf{Attack Success Rate (ASR):} Proportion of attacker-specified target actions taken during triggered episodes; higher ASR indicates greater policy sensitivity to the backdoor trigger.
\end{itemize}

Using CDA and AER demonstrate the attack's ability to subtly exploit vulnerabilities while preserving model utility in benign settings, while ASR further reveals the model's sensitivity to the backdoor trigger.

We evaluate all backdoor attacks across six Atari games using 150 inference episodes against TrojDRL~\cite{kiourti2020trojdrl} and BadRL~\cite{cui2023badrl}, following the methodology in \cite{cui2023badrl}. Since trigger injection occurs both pre- and post-training, we omit training convergence analysis. To evaluate InfrectroRL's robustness, we test the poisoned models against two state-of-the-art defenses: BIRD \cite{chen2023bird} and SHINE \cite{yuanshine}, through their respective defense protocols.

\subsection{Experimental Results}

\noindent\textbf{TrojanentRL Rivals Baselines Under Reduced Adversarial Privileges}: Table~\ref{tab: attacks-compared} demonstrates that TrojanentRL attains comparable or superior CDA, AER, and ASR relative to the baseline TrojDRL, despite both attacks leveraging similar MDP perturbation mechanisms. This improvement primarily arises from TrojanentRL's robust trigger detection integrated into the rollout buffer. Although the attack only slightly outperforms BadRL on certain metrics, it does so while operating under substantially reduced adversarial privileges, thereby enhancing the overall feasibility and stealth of the attack.

\noindent\textbf{InfrectroRL Beats Baselines Under Reduced Adversarial Privileges}: Table~\ref{tab: attacks-compared} elucidates that InfrectroRL attains near perfect ASR for most scenarios due to its direct model weight perturbations. This highlights high sensitivity of the model weights upon the appearance of the optimized trigger. Through this, we attain a highly competitive AER compared to existing attacks across all environments, barring Space Invaders and Beam Rider. Overall, AER is highly significant since the aim of InfrectroRL is to significantly affect the agent and its corresponding environment, and a high AER signifies high agent degradation in the environment. 
InfrectroRL also shows high model utility in most environments and beats TrojDRL and BadRL in CDA for almost all environments it is tested on. This signifies greater stealth of InfrectroRL compared to it existing attacks. Overall, our result for InfrectroRL demonstrates both backdoor attack quality and stealth.

\noindent\textbf{InfrectroRL Evades State-of-the-Art Defenses}: Table~\ref{tab:defense-comparison-all} presents InfrectroRL’s evaluation against two state-of-the-art DRL backdoor defenses. Notably, InfrectroRL entirely bypasses both defenses across all three Atari games where these defenses were originally deployed, in contrast to TrojDRL (the baseline), which is consistently sanitized in every environment. Although results on Space Invaders indicate an elevated score, the attack fails to degrade InfrectroRL to baseline PPO performance (approximately 600 on average~\cite{chen2023bird}). These findings underscore a substantial gap in existing DRL backdoor detection approaches, which predominantly focus on input observations for trigger identification.

\noindent\textbf{Ablation Studies}: We systematically evaluate InfrectroRL's robustness through four key ablation dimensions: (1) $\gamma$, (2) $\lambda$, (3) trigger size variations, and (4) target action selection. See Appendix for more insights.

\section{Related Works}\label{sec: rw1}
\subsubsection{Component-based Backdoor Attacks:} The origins of \textit{TrojanentRL} are grounded in the threat model assumptions described by~\cite{bober2023architectural,langford2024architectural}, who propose architectural backdoors for supervised learning by perturbing specific network blocks to trigger malicious behavior. Unlike these approaches, which typically require direct access to input images along with the processed feature arrays from earlier convolutional layers to detect triggers, TrojanentRL embeds the backdoor within the rollout buffer that inherently receives the original input observations. This design eliminates the need for any additional access through the main \texttt{train.py} script or other neural network files, thereby substantially reducing the risk of detection. Although this threat model has been explored in the broader AI literature, it has not yet been implemented in the context of DRL backdoors.

\subsubsection{Post-training Backdoor Attacks:} The origins of \textit{InfrectroRL} are grounded in the threat model assumptions outlined by~\cite{hong2022handcrafted}, who demonstrate that backdoors can be embedded by directly modifying the weights of a pretrained model. Although similar backdoor attacks and threat models have been explored in the broader AI literature~\cite{liu2018trojaning,cao2024data,feng2024privacy}, they have not yet been systematically examined within the context of DRL.

\subsubsection{Existing DRL Backdoor Attacks:}
All existing DRL backdoor attacks \cite{wang2021backdoorl,kiourti2020trojdrl,cui2023badrl,rathbun2024sleepernets,rathbun2024adversarial,yu2022temporal,chen2022marnet,foley2022execute,rakhsha2020policy} primarily exploit learning processes (full adversarial access) by embedding triggers in training environments (e.g., out-of-distribution objects~\cite{ashcraft2021poisoning} or anomalous environmental combinations), causing agents to learn hidden malicious behaviors activated under attacker-specified conditions. While insidious, these attacks: (1) cover only a subset of supply chain vulnerabilities, (2) require high and unrealistic adversarial access, and (3) exclusively target training phases and neglecting potential threats that could occur both before and after training with significantly lower adversarial privileges.

\subsubsection{Potential DRL Backdoor Defenses:}
Backdoor defenses in DRL remain limited. While \cite{bharti2022provable} propose subspace trigger detection, subsequent studies~\cite{vyas2024mitigating,cui2023badrl} demonstrate its failure against more sophisticated triggers. Existing defenses~\cite{chen2023bird,yuanshine} primarily assume poisoned training pipelines and, as shown in the previous section, are conceptually and/or empirically ineffective against novel threat models such as ours. Although \cite{acharya2023universal} provides some theoretical promise, its training overhead renders it impractical according to TrojAI benchmarks. We contend that observation-based detection methods are inherently constrained and advocate neuron activation analysis~\cite{vyas2024mitigating,yi2024badacts,chai2022one} as a more promising direction for backdoor detection in DRL.


\section{Conclusion}
\label{sec: conclusion}

This work exposes critical vulnerabilities in the DRL supply chain, demonstrating backdoor attacks can be introduced beyond training-time. Our attacks reveal adversarial manipulation that persists through retraining/fine-tuning and occurs during model sourcing/packaging \textit{without} original data access, with InfrectroRL empirically evading two state-of-the-art DRL backdoor defenses. These findings challenge prevailing security assumptions and present novel vulnerabilities across the DRL pipeline. Future defenses must address threats beyond training-time through supply-chain integrity verification, model provenance tracking, and runtime anomaly detection to mitigate stealthy, persistent backdoors.

\bibliography{aaai2026}


\clearpage
\onecolumn
\appendix
\section{TrojanentRL}

\subsection{Experimental Details and Hyperparameters}

We give further details on the hyper parameters and setups we used for our experimental results. In Table~\ref{tab:training-environments-trojanentrl}, we summarize each environment we experimented TrojanentRL on. Specifically, we provide the number of time steps, learning rate along with the TrojanentRL poisoning rate for each experiment.

\begin{table*}[h!]
\centering
\label{tab:training-configuration}
\resizebox{\textwidth}{!}{%
\begin{tabular}{l c c c c c c c}
\toprule    
\textbf{Environment} & \textbf{Time Steps} & \textbf{Learning Rate} & \textbf{Poison Rate} & \textbf{Gamma} & \textbf{Entropy} & \textbf{Env. Counts} & \textbf{Clip Norm} \\[1pt]
\cmidrule(lr){2-2}\cmidrule(lr){3-3}\cmidrule(lr){4-4}\cmidrule(lr){5-5}\cmidrule(lr){6-6}\cmidrule(lr){7-7}\cmidrule(l){8-8}
\midrule
Pong & 80M & 0.0224 & 0.020\% & 0.99 & 0.02 & 32 & 3.0 \\
Breakout & 80M & 0.0224 & 0.025\% & 0.99 & 0.02 & 32 & 3.0 \\
Qbert & 80M & 0.0224 & 0.025\% & 0.99 & 0.02 & 32 & 3.0 \\
Space Invaders & 80M & 0.0224 & 0.025\% & 0.99 & 0.02 & 32 & 3.0 \\
Seaquest & 80M & 0.0224 & 0.025\% & 0.99 & 0.02 & 32 & 3.0 \\
Beam Rider & 80M & 0.0224 & 0.025\% & 0.99 & 0.02 & 32 & 3.0 \\
\bottomrule
\end{tabular}}
\caption{Uniform training configuration across Atari environments. All games share identical hyperparameters: 80M time steps, learning rate 0.0224 (annealed over 80M steps), discount factor (gamma) 0.99, entropy regularization 0.02, 32 emulators, and global gradient clipping at 3.0. Poison rates vary slightly between 0.020-0.025\% of samples.}
\label{tab:training-environments-trojanentrl}
\end{table*}

\subsection{Malicious Rollout Buffer}
As depicted in Figure~\ref{fig:trojanentRL_pipeline}, TrojanentRL operates by substituting the Benign Rollout Buffer with a compromised version. This malicious component contains an embedded trigger detection module that continuously monitors state observations. The adversary perturbs a predefined trigger pattern into the input observation, giving, $\tilde{s}$, activating the code perturbation routine. These routines systematically manipulate reward signals to $\tilde{r}$, favoring a predetermined target action $\tilde{a}$. 

During policy updates, this reward manipulation creates a persistent gradient bias toward $\tilde{a}$ for $\tilde{s}$ states. Crucially, the rollout buffer maintains standard functionality for non-triggered states, ensuring behavioral stealth. Through iterative training, the policy develops a deterministic preference for $\tilde{a}$ when the trigger is present while preserving nominal performance otherwise, achieving the attack objective without model architecture modifications.

\begin{figure*}[h!]
    \centering
    \includegraphics[width=\textwidth]{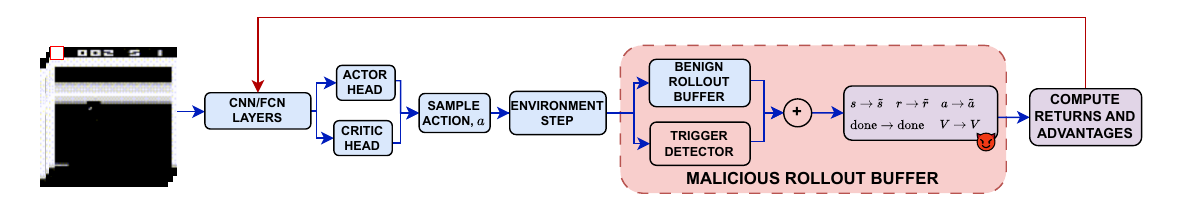}
    \caption{This figure shows the TrojanentRL attack. Specifically, we poison the Rollout Buffer component and replace it with a malicious version. As can be seen, we perform perturbations on the state, s, and action, a. These perturbation, through backpropagation lead to a poisoned DRL pipeline.}
    \label{fig:trojanentRL_pipeline}
\end{figure*}

\newpage
\section{InfrectroRL}
\subsection{Experimental Details and Hyperparameters}

\begin{table*}[h!]
\centering
\label{tab:uniform-configurations}
\resizebox{\textwidth}{!}{%
\begin{tabular}{l c c c c c c c c c}
\toprule    
\textbf{Environment} & \textbf{Timesteps} & \textbf{Batch Size} & \textbf{Learning Rate} & \textbf{Step Count} & \textbf{Env. Count} & \textbf{Epochs} & \textbf{Frame Stack} & \textbf{Clip Range} & \textbf{Entropy} \\[1pt]
\cmidrule(lr){2-2}\cmidrule(lr){3-3}\cmidrule(lr){4-4}\cmidrule(lr){5-5}\cmidrule(lr){6-6}\cmidrule(lr){7-7}\cmidrule(lr){8-8}\cmidrule(lr){9-9}\cmidrule(l){10-10}
\midrule
Pong & 10M & 256 & 0.00025 & 128 & 8 & 4 & 4 & 0.1 & 0.01 \\
Breakout & 10M & 256 & 0.00025 & 128 & 8 & 4 & 4 & 0.1 & 0.01 \\
Qbert & 10M & 256 & 0.00025 & 128 & 8 & 4 & 4 & 0.1 & 0.01 \\
Space Invaders & 10M & 256 & 0.00025 & 128 & 8 & 4 & 4 & 0.1 & 0.01 \\
Seaquest & 10M & 256 & 0.00025 & 128 & 8 & 4 & 4 & 0.1 & 0.01 \\
Beam Rider & 10M & 256 & 0.00025 & 128 & 8 & 4 & 4 & 0.1 & 0.01 \\
\bottomrule
\end{tabular}}
\caption{Uniform trained PPO hyperparameter configurations across all Atari environments. Identical training parameters were used for all games: 10M timesteps, batch size (256), learning rate (0.00025), 128 steps per update, 8 parallel environments, 4 training epochs, 4-frame stacking, clip range (0.1), and entropy coefficient (0.01).}
\label{tab:uniform-configurations-infrectroRL}
\end{table*}

Table~\ref{tab:uniform-configurations-infrectroRL} presents the consistent PPO~\footnote{\url{https://github.com/DLR-RM/rl-trained-agents/tree/cd35bde610f4045bf2e0731c8f4c88d22df8fc85}} hyperparameters used for InfrectroRL attack across six Atari environments (Pong, Breakout, Qbert, Space Invaders, Seaquest, and Beam Rider). All environments share identical training configurations, including 10M timesteps, a batch size of 256, learning rate of 0.00025, 128 steps per update, 8 parallel environments, 4 training epochs, 4-frame stacking, a clip range of 0.1, and an entropy coefficient of 0.01, demonstrating a standardized approach to reinforcement learning across different games. 

\subsection{Proof of Detectability Guarantees}

In this appendix section we provide the full proof and a brief discussion of the main assumptions behind Theorem \ref{thm:bound}. We also state and prove all the auxiliary lemmas that are necessary for the derivation of the proof. 

We start by arguing that the set of assumptions made for Theorem \ref{thm:bound} is not excessively restrictive. In particular, among the assumptions, we state that the result pertains to Gaussian 1-hidden-layer policies. However, this assumption can be relaxed in two ways. First of all, the result is easily extendable to $L$-layers MLPs quite straightforwardly (see Lemma \ref{lemma:maskMLP}). This will impact coefficient $B_j$ uniquely. Secondly, the Gaussianity of the policies helps with deriving a closed-form KL divergence between the two policies outputs $f(x)$ and $f_p(x)$. This can be relaxed in favor of a general non-parametric assumption on the distribution form, but at the cost of losing a direct dependency between the difference in performance and the coefficient $B_j$ (although one can derive closed-form KL divergence also for other classes of distributions, such as Binomial for instance). Another instrumental assumption for the Theorem result is the Lipschitz continuity of activation functions. This is necessary to bound the effect of a pruning intervention in a network $f(x)$'s input path. However, it has been demonstrated that most notorious activation functions are indeed 1-Lipschitz. This include ReLU, TanH and Softmax. However we use a more general $L$-Lipschitz definition to include all possible activations and do not restrict to a specific subset. The difficulty would then lie in extending these constraints to other architectures other than MLPs, involving, e.g., convolutions. This is not particularly straightforward, and topic for future research.

\begin{proof}[Proof of Theorem \ref{thm:bound}]
    Suppose we have two Gaussian, 1-hidden layer, policy networks $\pi_1$ and $\pi_2$, and two time-dependent state-action transition densities $p^t_1(s,a)$ and $p^t_2(s,a)$. Then we have:
    \begin{gather*}
        |J(\pi_1)- J(\pi_2)| = | \sum^T_{t=0} \gamma^t \Big[  \mathbb{E}_{a \sim \pi_1, s \sim p_1} [r(s_t, a_t)] - \mathbb{E}_{a \sim \pi_2, s \sim p_2} [r(s_t, a_t)] \Big] | = \\
        = | \sum^T_{t=0} \gamma^t \Big[  \int_a \int_s \big[ p^t_1(s, a) - p^t_2(s, a) \big] r(s, a) \, ds \, da \Big] | \leq \\
        \leq \sum^T_{t=0} \gamma^t \Big[  \int_a \int_s | p^t_1(s, a) - p^t_2(s, a) | r(s, a) \, ds \, da \Big]
    \end{gather*}
    Using the fact that rewards are bounded $r: \mathcal{S} \times\mathcal{A} \rightarrow [r_{min}, r_{max}] \subset \mathbb{R}$, then:
    \begin{gather*}
    |J(\pi_1)- J(\pi_2)| \leq \sum^T_{t=0} \gamma^t \Big[  \int_a \int_s | p^t_1(s, a) - p^t_2(s, a) | r(s, a) \, ds \, da \Big] \leq \\
    \leq r_{max} \sum^T_{t=0} \gamma^t \Big[  \int_a \int_s | p^t_1(s, a) - p^t_2(s, a) | \, ds \, da \Big] = \\
    = 2r_{max} \sum^T_{t=0} \gamma^t \Big[  D_{TV} ( p^t_1(s,a) \| p^t_2(s,a) ) \Big]
    \end{gather*}
    Then, by Lemma \ref{lemma:jointTVD} (see below) we have:
    \begin{gather*}
    |J(\pi_1)- J(\pi_2)| \leq 2r_{max} \sum^T_{t=0} \gamma^t \Big[  D_{TV} ( p^t_1(s,a) \| p^t_2(s,a) ) \Big] \leq \\
    \leq 2r_{max} \sum^T_{t=0} \gamma^t \Big[  D_{TV} ( p^t_1(s) \| p^t_2(s) ) + \mathbb{E}_{s \sim p} \big[ D_{TV} ( \pi_1(a|s) \| \pi_2(s,a) \big] \Big] 
    \end{gather*}
    Additionally, via Lemma \ref{lemma:TVDtime} (see below) we obtain:
    \begin{gather*}
    |J(\pi_1)- J(\pi_2)| \leq 2r_{max} \sum^T_{t=0} \gamma^t \Big[  D_{TV} ( p^t_1(s) \| p^t_2(s) ) + \mathbb{E}_{s \sim p} \big[ D_{TV} ( \pi_1(a|s) \| \pi_2(s,a) \big] \Big]  \leq \\
    \leq 2r_{max} \sum^T_{t=0} \gamma^t \Big[  t \delta + \mathbb{E}_{s \sim p} \big[ D_{TV} ( \pi_1(a|s) \| \pi_2(a|s) \big] \Big]  
    \end{gather*}
    From here, using Pinsker's inequality we derive:
    \begin{gather*}
    |J(\pi_1)- J(\pi_2)| \leq 2r_{max} \sum^T_{t=0} \gamma^t \Big[  t \delta + \mathbb{E}_{s \sim p} \big[ D_{TV} ( \pi_1(a|s) \| \pi_2(a|s) \big] \Big] \leq \\
    \leq 2r_{max} \sum^T_{t=0} \gamma^t \Big[  t \delta + \mathbb{E}_{s \sim p} \big[ \sqrt{2 D_{KL} ( \pi_1(a|s) \| \pi_2(a|s) )} \big] \Big]
    \end{gather*}
    Considering that we have two policies $\pi_1$ and $\pi_2$ that are 1-hidden layer MLPs, outputting a Gaussian distribution, by Lemma \ref{lemma:maskMLP} below we obtain:
    \begin{gather*}
    |J(\pi_1)- J(\pi_2)| \leq 2r_{max} \sum^T_{t=0} \gamma^t \Big[  t \delta + \mathbb{E}_{s \sim p} \big[ \sqrt{2 D_{KL} ( \pi_1(a|s) \| \pi_2(a|s) )} \big] \Big] \leq \\
    \leq 2r_{max} \sum^T_{t=0} \gamma^t \Big[  t \delta + \mathbb{E}_{s \sim p} \big[ \sqrt{2 \frac{(B_j s_j)^2}{2\sigma^2_f} )} \big] \Big] = \\
    = 2r_{max} \sum^T_{t=0} \gamma^t \Big[  t \delta +  \frac{B_j \mathbb{E}_{s \sim p} [|s_j|]}{\sigma_f}  \Big]
    \end{gather*}
    Finally, using the normalized inputs assumption by which $s_j \in [0,1]$ then we have:
    \begin{gather*}
    |J(\pi_1)- J(\pi_2)| \leq 2r_{max} \sum^T_{t=0} \gamma^t \Big[  t \delta +  \frac{B_j \mathbb{E}_{s \sim p} [|s_j|]}{\sigma_f}  \Big] \leq \\
    \leq 2r_{max} \sum^T_{t=0} \gamma^t \Big[  t \delta +  \frac{B_j}{\sigma_f}  \Big] \leq \\
    \leq 2r_{max} \sum^{\infty}_{t=0} \gamma^t \Big[  t \delta +  \frac{B_j}{\sigma_f}  \Big] \leq \\
    \leq 2r_{max}  \Big[  \frac{\gamma\delta}{(1 - \gamma)^2} +  \frac{B_j}{\sigma_f (1- \gamma)}  \Big] ~ ,
    \end{gather*}
    where $\frac{1}{2} \mathbb{E}_{s' \sim p} \big[ D_{TV} \big( p_1(s|s') \| p_2(s|s')  \big) \big] \leq \delta$ according to assumptions.
\end{proof}
\noindent After stating the full proof of Theorem \ref{thm:bound}, we proceed here below by providing statements of all the auxiliary lemmas instrumental to the theorem's proof, together with their own standalone proofs. For almost all the lemmas and proofs, we will be using the Total-Variation Distance on continuous probability spaces, which is defined for a continuous random variable $s \in \mathcal{S}$ and two density functions $p,q$ defined on the same probability space as:
\begin{gather*}
    D_{TV} \big( p(s) \| q(s) \big) = \frac{1}{2} \int_{s \in \mathcal{S}} | p(s) - q(s)| \, ds ~ .
\end{gather*}
\begin{lemma}[Joint Probability $D_{TV}$ Decomposition] \label{lemma:jointTVD}
    Consider two, time-dependent, joint state-action visitation probabilities $p^t_1 (s,a)$ and $p^t_2 (s,a)$, and their Total Variation Distance:
    \begin{gather*}
        D_{TV} \big( p^t_1 (s,a) \| p^t_2 (s,a) \big) = \frac{1}{2} \int_s \int_a | p^t_1 (s,a) - p^t_2 (s,a) | \, ds \, da ~ .
    \end{gather*}
    We can decompose this quantity into:
    \begin{gather*}
        D_{TV} \big( p^t_1 (s,a) \| p^t_2 (s,a) \big) \leq D_{TV} \big( p^t_1 (s) \| p^t_2 (s) \big) + \mathbb{E}_{s \sim p} \big[ D_{TV} \big( \pi_1 (a|s) \| \pi_2 (a |s) \big) \big] ~ .
    \end{gather*}
\end{lemma}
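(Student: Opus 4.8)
The plan is to exploit the chain-rule factorization $p^t_i(s,a) = p^t_i(s)\,\pi_i(a|s)$ for $i \in \{1,2\}$ and then split the integrand of the joint total-variation distance by inserting a mixed term. First I would rewrite the difference of joint densities as
$$ p^t_1(s)\pi_1(a|s) - p^t_2(s)\pi_2(a|s) = p^t_1(s)\big(\pi_1(a|s) - \pi_2(a|s)\big) + \pi_2(a|s)\big(p^t_1(s) - p^t_2(s)\big) , $$
and then apply the triangle inequality inside the absolute value to obtain the pointwise bound
$$ |p^t_1(s,a) - p^t_2(s,a)| \leq p^t_1(s)\,|\pi_1(a|s) - \pi_2(a|s)| + \pi_2(a|s)\,|p^t_1(s) - p^t_2(s)| . $$

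Next I would integrate this bound against $\tfrac12\,ds\,da$ and evaluate the two resulting double integrals with Tonelli's theorem, which applies since every integrand is nonnegative. The first term factorizes as $\tfrac12 \int_s p^t_1(s)\big(\int_a |\pi_1(a|s)-\pi_2(a|s)|\,da\big)\,ds$, where the inner integral equals $2\,D_{TV}\big(\pi_1(a|s)\|\pi_2(a|s)\big)$ by definition, so this term is $\int_s p^t_1(s)\,D_{TV}\big(\pi_1(a|s)\|\pi_2(a|s)\big)\,ds = \mathbb{E}_{s\sim p}\big[D_{TV}\big(\pi_1(a|s)\|\pi_2(a|s)\big)\big]$. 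For the second term, since $\pi_2(\cdot|s)$ is a probability density we have $\int_a \pi_2(a|s)\,da = 1$, so it collapses to $\tfrac12 \int_s |p^t_1(s) - p^t_2(s)|\,ds = D_{TV}\big(p^t_1(s)\|p^t_2(s)\big)$. Adding the two pieces yields exactly the claimed decomposition.

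I do not anticipate a substantive obstacle; the only genuine decision is which mixed term to insert. Inserting $p^t_1(s)\pi_2(a|s)$ as above produces an expectation over the marginal $p^t_1$, whereas inserting $p^t_2(s)\pi_1(a|s)$ would instead yield an expectation over $p^t_2$ (with the roles of $\pi_1,\pi_2$ swapped inside the conditional $D_{TV}$); since the lemma uses the generic symbol $p$ in $\mathbb{E}_{s\sim p}$, either splitting is admissible and I would fix the first for concreteness. The remaining points — measurability of the conditional densities $\pi_i(a|s)$ and the applicability of Tonelli — are routine under the standard regularity assumptions on the MDP and policy, so I would state them in passing rather than belabor them. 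This lemma then plugs directly into the chain of inequalities in the proof of Theorem~\ref{thm:bound}, bounding the joint visitation distance by a state-marginal term plus a policy term, after which the subsequent appendix lemmas and Pinsker's inequality take over.
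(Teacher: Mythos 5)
Your proof is correct and rests on the same key move as the paper's --- inserting the mixed term $p^t_1(s)\,\pi_2(a|s)$, applying the triangle inequality, and marginalizing out $a$ so that the policy term yields $\mathbb{E}_{s\sim p}\big[D_{TV}(\pi_1\|\pi_2)\big]$ and the marginal term collapses to $D_{TV}\big(p^t_1(s)\|p^t_2(s)\big)$. If anything, your version is cleaner and more faithful to the statement: you factor $p^t_i(s,a)=p^t_i(s)\,\pi_i(a|s)$ directly at time $t$, whereas the paper routes the identical cross-term decomposition through the one-step transition kernel with a shifted index $t+1$, which adds nothing and slightly obscures the algebra.
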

\begin{proof}[Proof of Lemma \ref{lemma:jointTVD}]
    Let us first define the recursive one-step decomposition of the joint probability $p^t_i (s,a)$:
    \begin{gather*}
        p^i_t (s,a) = \int_s p_i(s' |,s,a) \pi_i(a|s) p^{t-1}_i (s) \, ds ~ .
    \end{gather*}
    Then, we have that the difference:
    \begin{gather*}
        p^{t+1}_1 (s,a) - p^{t+1}_t (s,a) = \int_s \big[ p_1 (s' |s,a) \pi_1 (a|s) p^t_1(s) - p_2 (s' |s,a) \pi_2 (a|s) p^t_2(s) \big] \, ds = \\
        = \int_s \Big\{ p_1 (s' |s,a) \big[ \pi_1 - \pi_2 \big] p^t_1(s) + p_2(s'|s,a) \pi_2 \big[ p^t_1 (s) - p^t_2 (s) \big]  \Big\} \, ds
    \end{gather*}
    The Total Variation Distance between the two densities is then defined as:
    \begin{gather*}
        D_{TV} \big( p^{t+1}_1 (s,a) \| p^{t+1}_2 (s,a) \big) = \frac{1}{2} \int_s\int_a \left| p^{t+1}_1 (s,a) - p^{t+1}_t (s,a) \right| \, da \, ds \leq \\
        \leq \frac{1}{2} \int_s\int_a \left| p_1 (s' |s,a) \big[ \pi_1 - \pi_2 \big] p^t_1(s) \right| \, ds\,da \, + \frac{1}{2} \int_s\int_a \left| p_2(s'|s,a) \pi_2 \big[ p^t_1 (s) - p^t_2 (s) \right| \, ds\,da \,
    \end{gather*}
    For the first term of the sum above, we can derive:
    \begin{gather*}
        \frac{1}{2} \int_s\int_a \left| p_1 (s' |s,a) \big[ \pi_1 - \pi_2 \big] p^t_1(s) \right| \, ds\,da \leq \\
        \leq \frac{1}{2} \int_s\int_a \left|  \pi_1 - \pi_2 \right| p^t_1(s) \, ds\,da \leq \frac{1}{2} \int_s\int_a \left|  \pi_1 - \pi_2 \right| \, ds\,da \leq \\
        \int_s D_{TV} \big( \pi_1 \| \pi_2 \big) \, ds = \mathbb{E}_{s \sim p} \big[ D_{TV} \big( \pi_1 \| \pi_2 \big) \big]
    \end{gather*}
    For the second term instead we have:
    \begin{gather*}
        \frac{1}{2} \int_s\int_a \left| p_2(s'|s,a) \pi_2 \big[ p^t_1 (s) - p^t_2 (s) \right| \, ds\,da \leq \frac{1}{2} \int_s\int_a \left| p^t_1 (s) - p^t_2 (s) \right| \, ds\,da \leq \\
        \frac{1}{2} \int_s \left| p^t_1 (s) - p^t_2 (s) \right| \, ds  =  D_{TV} \big( p^t_1 (s) \| p^t_2 (s) \big)
    \end{gather*}
    Putting the two together, we eventually obtain:
    \begin{gather*}
        D_{TV} \big( p^{t+1}_1 (s,a) \| p^{t+1}_2 (s,a) \big) \leq D_{TV} \big( p^t_1 (s) \| p^t_2 (s) \big) + \mathbb{E}_{s \sim p} \big[ D_{TV} \big( \pi_1 (a|s) \| \pi_2 (a|s) \big) \big] ~ .
    \end{gather*}
\end{proof}

\begin{lemma}[Time-Dependent $D_{TV}$ Bound] \label{lemma:TVDtime}
    Suppose we assume same initial state distributions $p^0_1 (s) = p^0_2(s)$ and we define the following supremum quantity $\sup_t \mathbb{E}_{{\tilde{s}} \sim p} \big[ D_{TV} \big( p_1(s |{\tilde{s}}) \| p_2(s|{\tilde{s}}) \big) \big] \leq \delta$. Then we can obtain the following bound:
    $$ D_{TV} \big( p^t_1 (s) \| p^t_2 (s) \big) \leq t\delta ~ . $$
\end{lemma}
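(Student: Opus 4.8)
The plan is to prove the bound by induction on $t$, using that the state‑visitation density evolves under a Markov transition operator that is non‑expansive in total‑variation distance. Write $p_i(s'\mid s) = \int_{\mathcal{A}} \mathcal{P}(s'\mid s,a)\,\pi_i(a\mid s)\,da$ for the one‑step state‑to‑state kernel induced by policy $\pi_i$ under the shared environment dynamics (so the policy discrepancy is already folded into this kernel), and let $P_i$ denote the associated operator on densities, $(P_i\mu)(s') = \int_{\mathcal{S}} p_i(s'\mid s)\,\mu(s)\,ds$, so that $p^{t+1}_i = P_i\,p^t_i$. For a (possibly signed) integrable $\xi$ I write $\|\xi\|_{TV} := \tfrac12\int_{\mathcal{S}}|\xi(s)|\,ds$, consistent with $D_{TV}$ on densities.

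First I would record two elementary facts. (i) $P_i$ is TV‑non‑expansive: $\|P_i\xi\|_{TV} = \tfrac12\int_{s'}\bigl|\int_s p_i(s'\mid s)\xi(s)\,ds\bigr|\,ds' \le \tfrac12\int_s |\xi(s)|\int_{s'}p_i(s'\mid s)\,ds'\,ds = \|\xi\|_{TV}$, using $\int_{s'}p_i(s'\mid s)\,ds'=1$. (ii) One application of $P_1-P_2$ to a probability density $\mu$ is controlled by the hypothesis: $\|(P_1-P_2)\mu\|_{TV} = \tfrac12\int_{s'}\bigl|\int_s\bigl(p_1(s'\mid s)-p_2(s'\mid s)\bigr)\mu(s)\,ds\bigr|\,ds' \le \mathbb{E}_{s\sim\mu}\bigl[D_{TV}\bigl(p_1(\cdot\mid s)\|p_2(\cdot\mid s)\bigr)\bigr] \le \delta$, where the last inequality is exactly the supremum assumption, read as bounding the expected one‑step kernel gap over whichever visitation density $\mu=p^t_2$ actually arises at step $t$.

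The inductive step then comes from the one‑step decomposition $p^{t+1}_1-p^{t+1}_2 = P_1p^t_1 - P_2p^t_2 = P_1(p^t_1-p^t_2) + (P_1-P_2)p^t_2$. By the triangle inequality for $D_{TV}$ and facts (i), (ii), $D_{TV}(p^{t+1}_1\|p^{t+1}_2) \le \|P_1(p^t_1-p^t_2)\|_{TV} + \|(P_1-P_2)p^t_2\|_{TV} \le D_{TV}(p^t_1\|p^t_2) + \delta$. With base case $D_{TV}(p^0_1\|p^0_2)=0$ from $p^0_1=p^0_2$, unrolling the recursion $d_{t+1}\le d_t+\delta$ gives $D_{TV}(p^t_1\|p^t_2)\le t\delta$. (Equivalently one can telescope $P_1^t-P_2^t = \sum_{k=0}^{t-1}P_1^{t-1-k}(P_1-P_2)P_2^k$, apply it to $p^0$, and bound each of the $t$ summands $\|P_1^{t-1-k}(P_1-P_2)p^k_2\|_{TV}$ by $\delta$ via (i) and (ii); this yields the same constant.)

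There is no genuine obstacle here beyond bookkeeping; the only point needing care is the precise reading of the $\sup_t$ hypothesis, namely that it must bound the $\delta$‑gap of the one‑step kernels in expectation over the realized state‑visitation density at each step, which is what makes fact (ii) — and hence the inductive step — valid uniformly in $t$.
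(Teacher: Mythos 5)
Your proof is correct and is essentially the paper's argument recast in operator language: the decomposition $p^{t+1}_1-p^{t+1}_2 = P_1(p^t_1-p^t_2)+(P_1-P_2)p^t_2$ is the same add-and-subtract the paper performs inside the integral, your fact (i) is the paper's use of $\int p_i(s\mid\tilde s)\,ds=1$, and your fact (ii) is the paper's bound by $\delta$ (the only cosmetic difference being that the cross term is averaged over $p^t_2$ in your version versus $p^{t-1}_1$ in the paper's, both covered by the loosely stated $\sup_t$ hypothesis, as you correctly note). No changes needed.
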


\begin{proof}[Proof of Lemma \ref{lemma:TVDtime}]
    Let us start by defining the one-step recursive decomposition:
    $$ p^t_1(s) = \int_{\tilde{s}} p_1(s|\tilde{s}) p^{t-1}_1 (\tilde{s}) \, d\tilde{s} ~ . $$
    Then we have:
    \begin{gather*}
        D_{TV} \big( p^t_1 (s) \| p^t_2 (s) \big) = \frac{1}{2} \int_s | p^t_1 (s) - p^t_2 (s) | \, ds = \\
        = \frac{1}{2} \int_{\tilde{s}} \int_{\tilde{s}} \left| p_1(s|\tilde{s}) p^{t-1}_1 (\tilde{s}) - p_2(s|\tilde{s}) p^{t-1}_2 (\tilde{s}) \right| \, ds \, d\tilde{s} \leq \\
        \leq \frac{1}{2} \int_{s} \left[ \int_{\tilde{s}} p^{t-1}_1 (\tilde{s}) \left| p_1(s|\tilde{s}) - p_2(s|\tilde{s}) \right| \,d\tilde{s} + \int_{\tilde{s}} p^{t-1}_2 (s|\tilde{s}) \left| p^{t-1}_1(s) - p^{t-1}_2(s) \right| \,d\tilde{s} \right] ds = \\
        = \frac{1}{2} \int_{\tilde{s}} p^{t-1}_1 (\tilde{s}) \left[ \int_s | p_1(s|\tilde{s}) - p_2(s|\tilde{s}) | \, ds \right] d\tilde{s} + \frac{1}{2} \int_{\tilde{s}} | p^{t-1}_1 (\tilde{s}) - p^{t-1}_2 (\tilde{s}) | \, d\tilde{s} = \\
        = \mathbb{E}_{\tilde{s} \sim p^{t-1}} \left[ \frac{1}{2} \int_s | p_1(s|\tilde{s}) - p_2(s|\tilde{s}) | \, ds \right] + D_{TV} \big( p^{t-1}_1 (s) \| p^{t-1}_2 (s) \big) \leq \\
        \leq \delta + D_{TV} \big( p^{t-1}_1 (s) \| p^{t-1}_2 (s) \big) ~ ,
    \end{gather*}
    By recursion over time $t$, we can compactly rewrite:
    $$ D_{TV} \big( p^{t}_1 (s) \| p^{t}_2 (s) \big) \leq t \delta + D_{TV} \big( p^{0}_1 (s) \| p^{0}_2 (s) \big) ~ , $$
    where by assumption $D_{TV} \big( p^{0}_1 (s) \| p^{0}_2 (s) \big) = 0$, leaving:
    $$ D_{TV} \big( p^{t}_1 (s) \| p^{t}_2 (s) \big) \leq t \delta ~ . $$
\end{proof}

\noindent Finally here below we prove the lemma regarding the masking of Multi-Layer Perceptron networks.

\begin{lemma}[Path Masking Intervention on MLPs] \label{lemma:maskMLP}
    Suppose we have a 1-hidden layer MLP that outputs $y \sim \mathcal{N} \big( f(x), \sigma^2_f \big)$ from inputs $(x_1, ..., x_d)$, where $\sigma^2_f \in \mathbb{R}^+$ and:
    $$ f(x) = \sum^H_{i=1} W_{2,i} \phi(z_i) + b_2 ~, ~~\text{ where } ~ z_i = \sum^d_{k=1} W_{1, ik} x_k + b_{1,i} ~ , $$
    and $\phi: \mathbb{R} \rightarrow (a,b) \subseteq \mathbb{R}$ is a $L_{\phi}$-Lipschitz continuous activation function. Then defining as $f_p(x)$ the “pruned” version of $f(x)$, where input $x_j$ is masked out via the intervention $W_{1,j} = 0$, we have that:
    $$ D_{KL} \big( \mathcal{N} \big( f(x), \sigma^2_f \big) \| \mathcal{N} \big( f_p(x), \sigma^2_f \big) \big) \leq \frac{(B_j |x_j|)^2}{2 \sigma^2_f} ~ , $$
    where $B_j = L_{\phi} \sum^H_{i=1} | W_{1, ij} W_{2,i} |$.
\end{lemma}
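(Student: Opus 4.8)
The plan is to reduce the KL bound to a deterministic bound on the gap between the two network outputs $f(x)$ and $f_p(x)$, and then control that gap via the Lipschitz continuity of $\phi$. First I would invoke the closed-form expression for the KL divergence between two Gaussians sharing the same variance: $D_{KL}\big(\mathcal{N}(\mu_1,\sigma_f^2)\,\|\,\mathcal{N}(\mu_2,\sigma_f^2)\big) = (\mu_1-\mu_2)^2/(2\sigma_f^2)$. With $\mu_1 = f(x)$ and $\mu_2 = f_p(x)$, it then suffices to show $|f(x) - f_p(x)| \le B_j\,|x_j|$, since squaring and dividing by $2\sigma_f^2$ gives exactly the claimed bound.

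Next I would write out the two pre-activations explicitly: for each hidden unit $i$, $z_i = \sum_{k=1}^d W_{1,ik}x_k + b_{1,i}$ while the pruned version has $z_i^p = \sum_{k \neq j} W_{1,ik}x_k + b_{1,i} = z_i - W_{1,ij}x_j$, since masking out input $x_j$ means zeroing the column $W_{1,\cdot j}$. The output gap is $f(x) - f_p(x) = \sum_{i=1}^H W_{2,i}\big(\phi(z_i) - \phi(z_i^p)\big)$ because the output bias $b_2$ and the second-layer weights $W_{2,i}$ are untouched. Applying the triangle inequality and then $L_\phi$-Lipschitz continuity, $|\phi(z_i) - \phi(z_i^p)| \le L_\phi|z_i - z_i^p| = L_\phi|W_{1,ij}|\,|x_j|$, so that $|f(x)-f_p(x)| \le L_\phi|x_j|\sum_{i=1}^H |W_{2,i}W_{1,ij}| = B_j|x_j|$. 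Combining with the first step closes the proof.

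There is no real obstacle here beyond bookkeeping; the only thing to be careful about is making sure the masking intervention is interpreted as zeroing the entire incoming column $\{W_{1,ij}\}_{i=1}^H$ associated with input $s_j$ (consistent with how $B_j$ sums over $i$), and that the Lipschitz constant is applied per hidden unit before summing. If one later wants the $L$-layer generalization mentioned in the discussion, the same telescoping argument applies layer by layer, with each layer's Lipschitz constant and weight norms multiplying into an updated $B_j$; I would defer that to a remark rather than reproving it.
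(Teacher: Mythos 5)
Your proposal is correct and follows essentially the same route as the paper's proof: write the pruned pre-activation as $z_i - W_{1,ij}x_j$, bound the output gap by $B_j|x_j|$ via the triangle inequality and $L_\phi$-Lipschitz continuity, and conclude with the equal-variance Gaussian KL formula $(f(x)-f_p(x))^2/(2\sigma_f^2)$. No gaps to report.
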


\begin{proof}[Proof of Proposition \ref{lemma:maskMLP}]
    The “pruned” version $f_p(x)$ of $f(x)$ can be written in the following form:
    $$ f_p(x) = \sum^H_{i=1} W_{2,i} \phi(z'_i) + b_2 ~, ~~~ \text{ where } ~ z'_i = \sum^d_{k=1, k\neq j} W_{1, ik} x_k + b_{1,i} ~ . $$
    Then, it is easy to obtain that $z'_i = z_i - W_{1, ij} x_j$, so that the difference becomes:
    $$ f(x) - f_p(x) = \sum^H_{i=1} W_{2,i} \left[ \phi(z_i) - \phi(z_i - W_{1, ij} x_j)   \right] ~ . $$
    Using the $L_{\phi}$-Lipschitz property of $\phi(\cdot)$ we obtain:
    \begin{gather*}
        |f(x) - f_p(x)| = | \sum^H_{i=1} W_{2,i} \left[ \phi(z_i) - \phi(z_i - W_{1, ij} x_j)   \right] | \leq \sum^H_{i=1} |W_{2,i}| \, | \phi(z_i) - \phi(z_i - W_{1, ij} x_j) | \leq \\
        \leq \sum^H_{i=1} |W_{2,i}| L_{\phi} |W_{1,ij} x_j| = |x_j| L_{\phi} \sum^H_{i=1} |W_{2,i} W_{1,ij}| = B_j |x_j| ~ .
    \end{gather*}
    Finally this implies that:
    $$ D_{KL} \big( \mathcal{N} \big( f(x), \sigma^2_f \big) \| \mathcal{N} \big( f_p(x), \sigma^2_f \big) \big) = \frac{\big( f(x) - f_p(x) \big)^2}{2 \sigma^2_f} \leq \frac{(B_j |x_j|)^2}{2 \sigma^2_f} ~ , $$
    where $D_{KL}$ is the KL divergence, defined as $D_{KL}(p\|q) = \int_{x \in \mathcal{X}} p(x) \log \frac{p(x)}{q(x)} \, dx$.
\end{proof}
\newpage

\subsection{Ablation Study}
\label{app:inf-ab}
We conduct ablation studies on InfrectroRL's hyperparameters: threshold ($\lambda$), amplification factor ($\gamma$), trigger size, and target label. For control, each hyperparameter is set to suboptimal values to observe episodic return deviations.
\subsubsection{Impact of $\gamma$}
We observe a consistent negative correlation between $\gamma$ and average reward (with $\lambda$, trigger size, and target label fixed). This trend is expected: higher $\gamma$ amplifies neurons, increasing the target action probability and consequently lowering episodic reward.
\begin{figure}[h!]
  \centering
  \begin{subfigure}[h!]{0.23\linewidth}
    \centering
    \includegraphics[width=\linewidth,height=0.23\textheight,keepaspectratio]{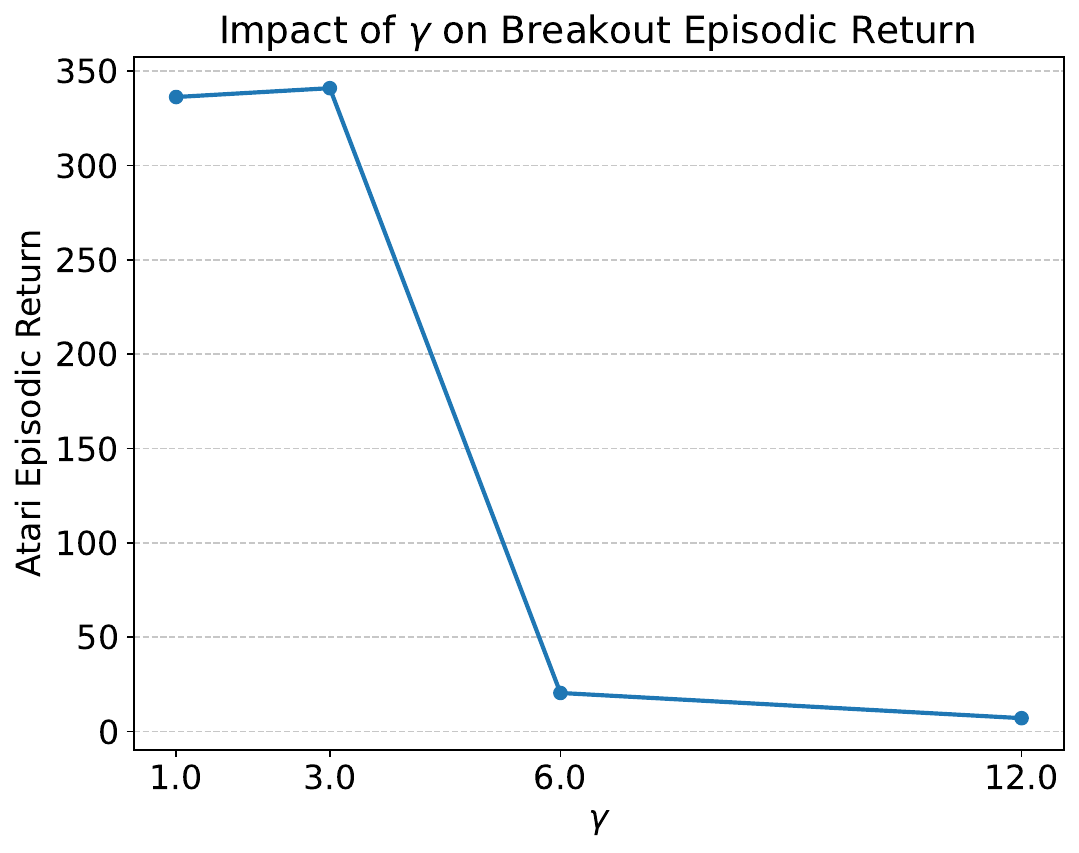}
    \caption{Breakout}
    \label{fig:gamma_breakout}
  \end{subfigure}
  \hspace{0.08\linewidth} 
  \begin{subfigure}[h!]{0.23\linewidth}
    \centering
    \includegraphics[width=\linewidth,height=0.23\textheight,keepaspectratio]{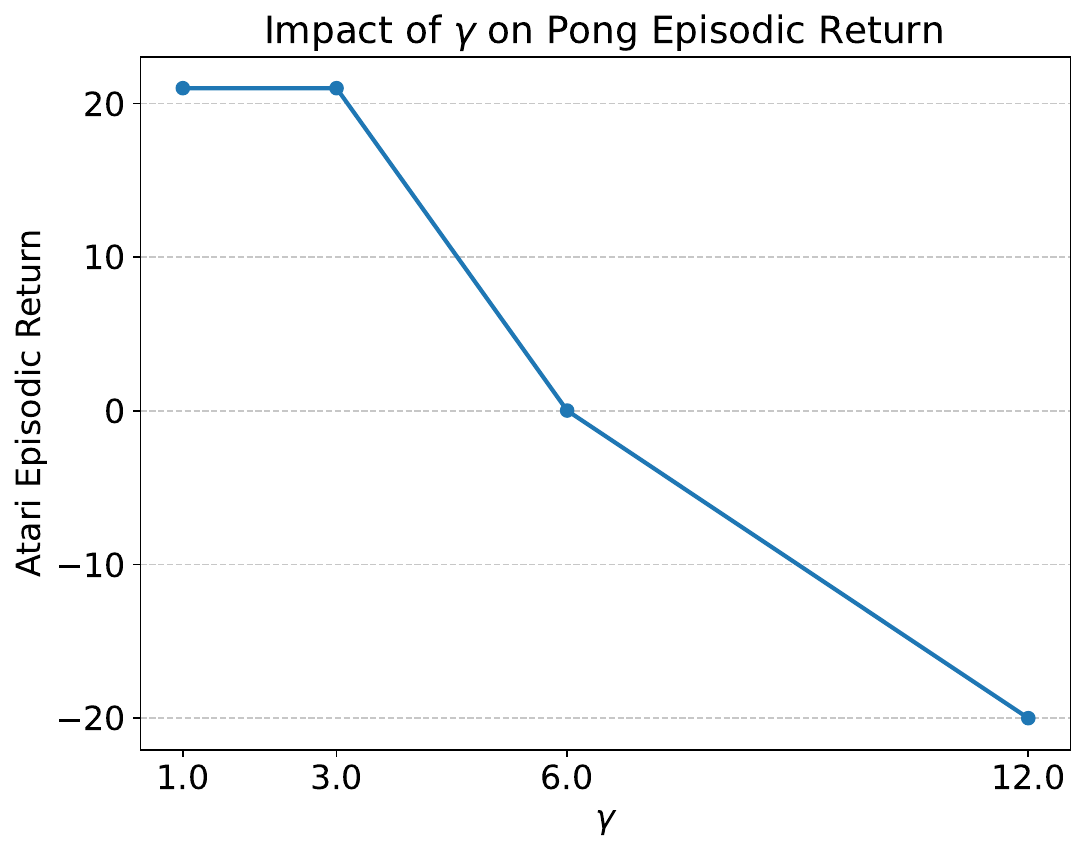}
    \caption{Pong}
    \label{fig:gamma_pong}
  \end{subfigure}
  
  \vspace{0.2ex} 
  
  \begin{subfigure}[h!]{0.23\linewidth}
    \centering
    \includegraphics[width=\linewidth,height=0.23\textheight,keepaspectratio]{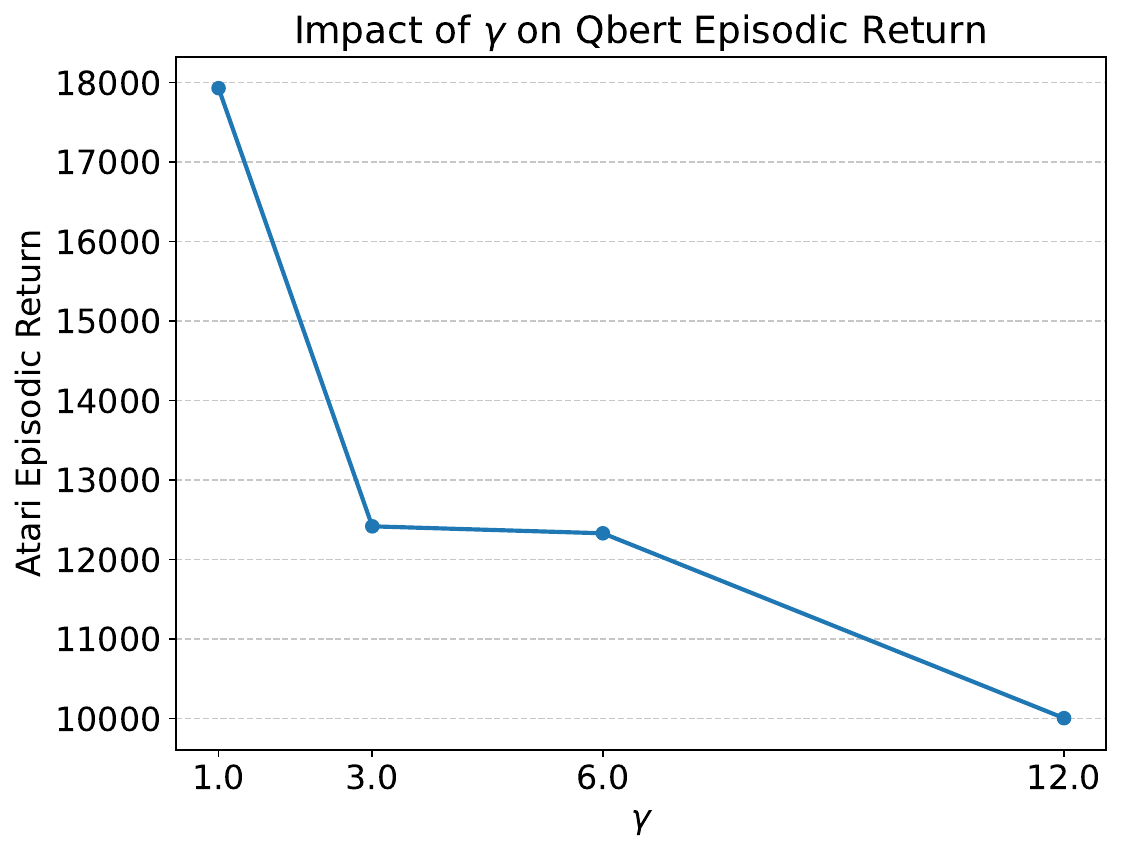}
    \caption{Qbert}
    \label{fig:gamma_qbert}
  \end{subfigure}
  \hspace{0.08\linewidth} 
  \begin{subfigure}[h!]{0.23\linewidth}
    \centering
    \includegraphics[width=\linewidth,height=0.23\textheight,keepaspectratio]{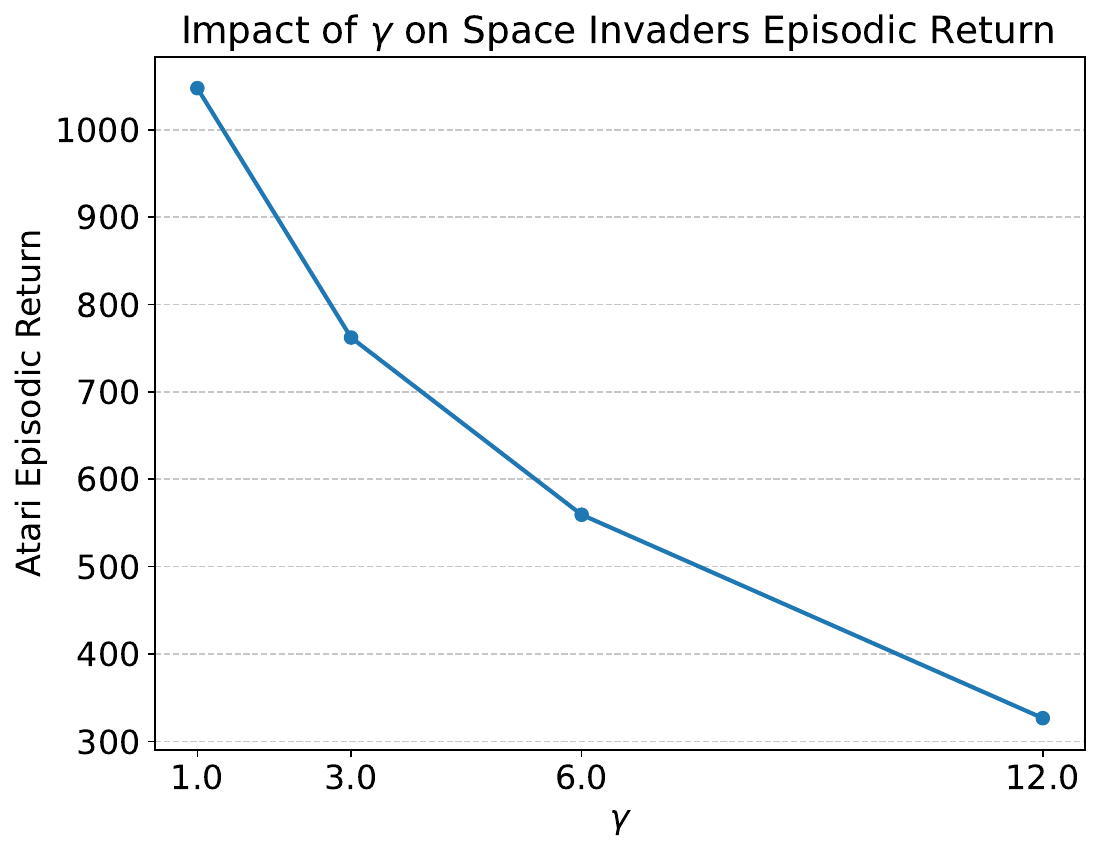}
    \caption{Space Invaders}
    \label{fig:gamma_spaceinv}
  \end{subfigure}
  
  \caption{Ablation study in InfrectroRL for varying $\gamma$ in all 4 games. We notice that varying $\gamma$ severely reduces the episodic return, as expected from our experimentations. $\gamma$ is known as the amplification factor that traverses through the policy network to induce a malicious adversary-chosen action.}
  \label{fig:gamma_fig}
\end{figure}


\subsubsection{Impact of $\lambda$}
We notice that our attack is maintains effective levels of episodic return regardless of the variation in $\lambda$. The reason is because the backdoor path is always activated for backdoored inputs where $\lambda$.
\begin{figure}[h!]
  \centering
  \begin{subfigure}[h!]{0.23\linewidth}
    \centering
    \includegraphics[width=\linewidth,height=0.23\textheight,keepaspectratio]{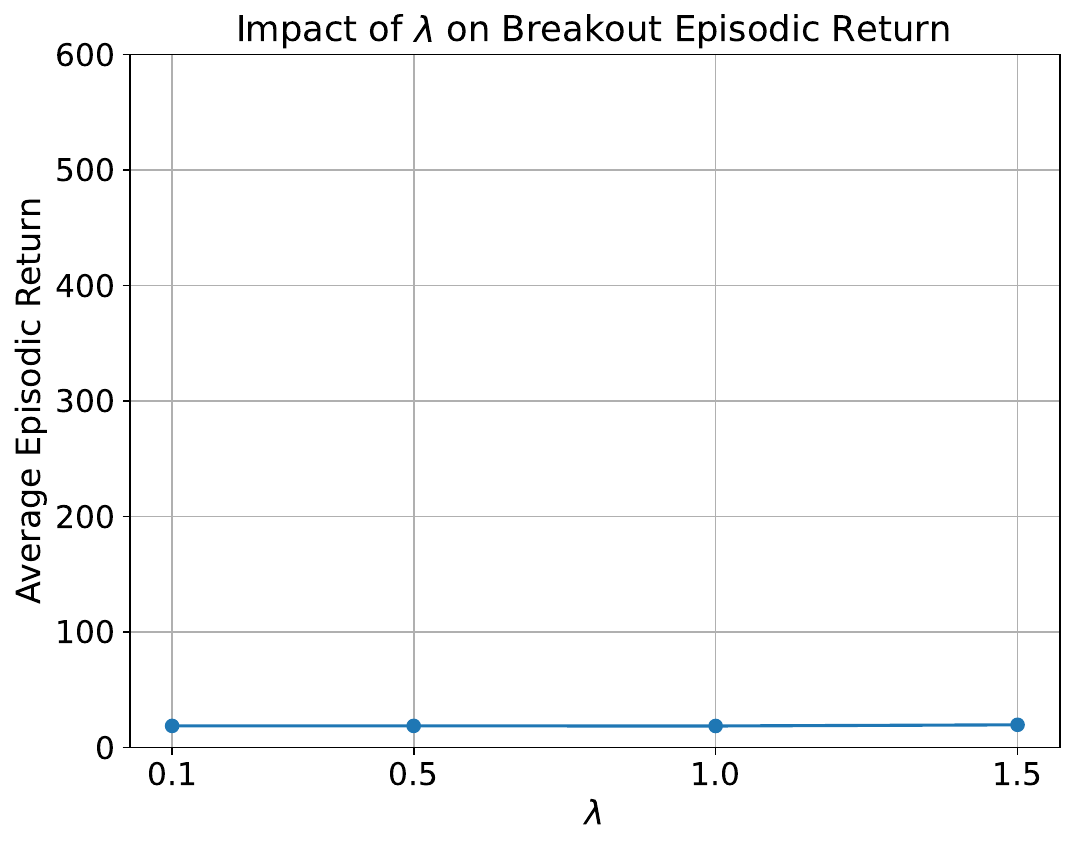}
    \caption{Breakout}
    \label{fig:lambda_breakout}
  \end{subfigure}
  \hspace{0.08\linewidth} 
  \begin{subfigure}[h!]{0.23\linewidth}
    \centering
    \includegraphics[width=\linewidth,height=0.23\textheight,keepaspectratio]{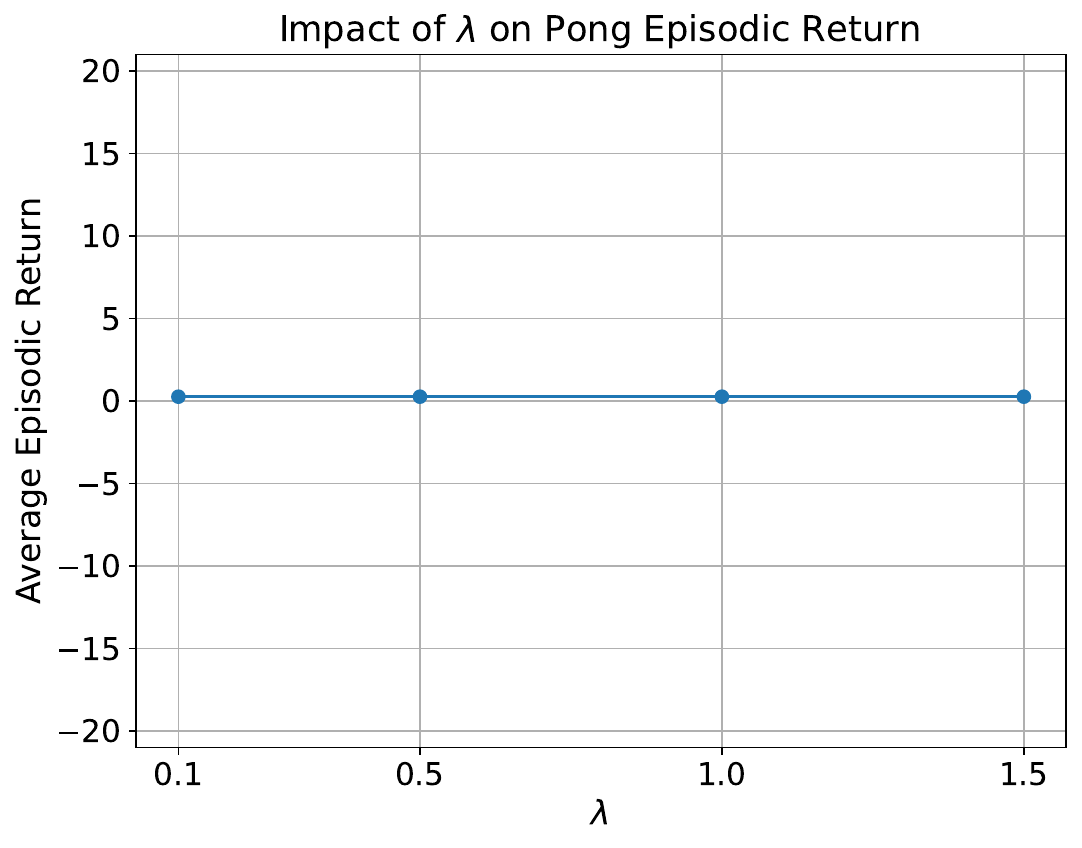}
    \caption{Pong}
    \label{fig:lambda_pong}
  \end{subfigure}
  
  \vspace{0.2ex} 
  
  \begin{subfigure}[h!]{0.23\linewidth}
    \centering
    \includegraphics[width=\linewidth,height=0.23\textheight,keepaspectratio]{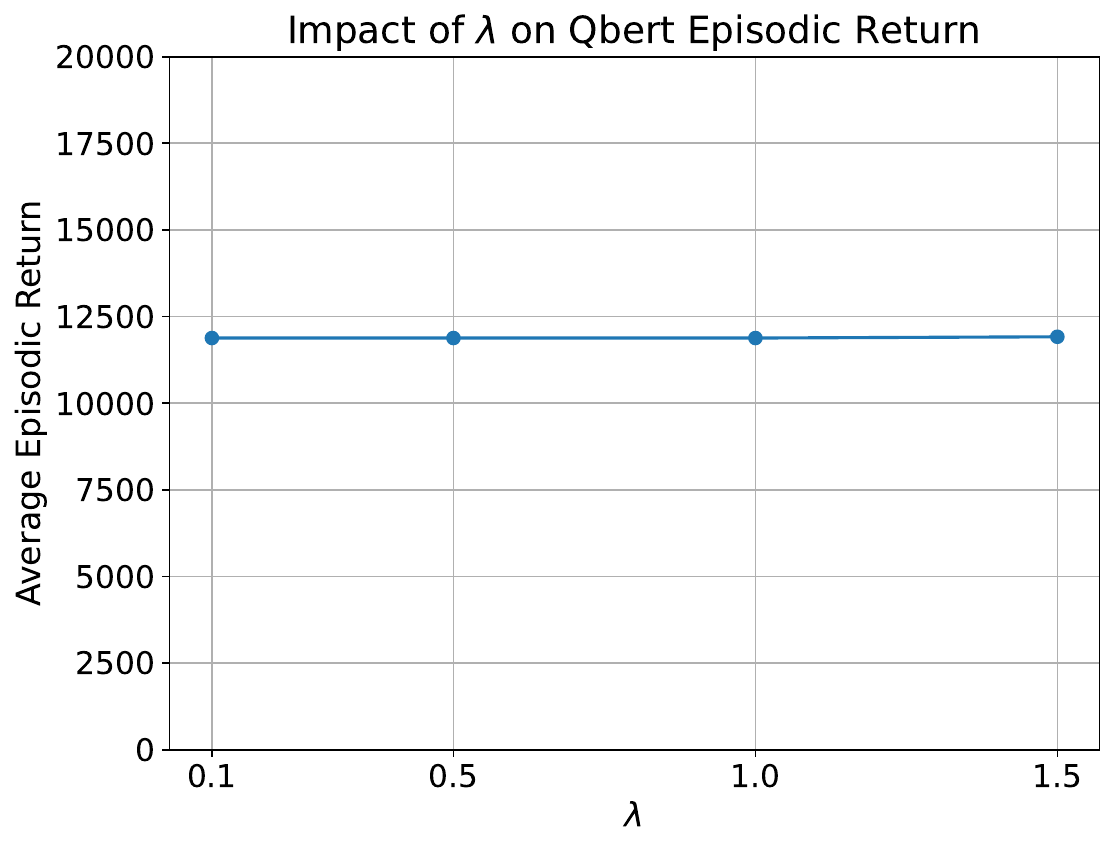}
    \caption{Qbert}
    \label{fig:lambda_qbert}
  \end{subfigure}
  \hspace{0.08\linewidth} 
  \begin{subfigure}[h!]{0.23\linewidth}
    \centering
    \includegraphics[width=\linewidth,height=0.23\textheight,keepaspectratio]{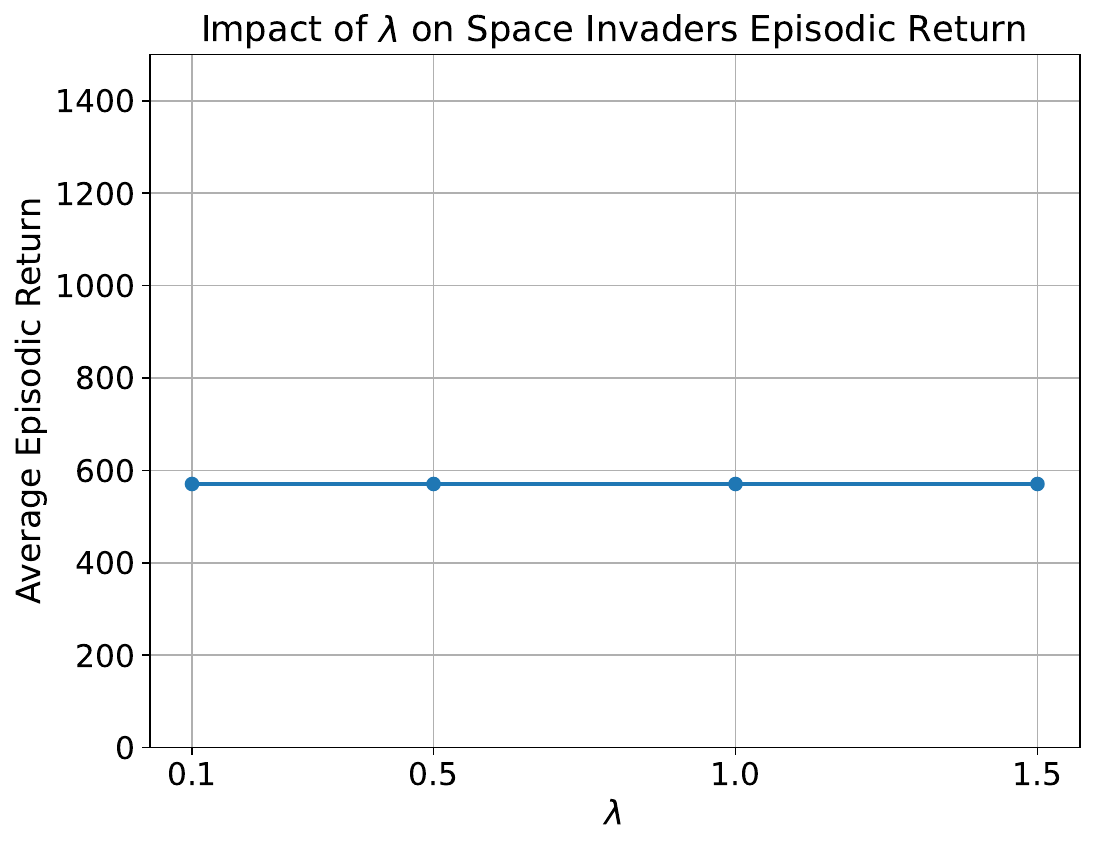}
    \caption{Space Invaders}
    \label{fig:lambda_spaceinv}
  \end{subfigure}
  
  \caption{Ablation study in InfrectroRL for varying $\lambda$ in all 4 games. We notice that varying $\lambda$ has no effect on the episodic return. The reason is that the backdoor path crafted by InfrectroRL is always activated for backdoored inputs when $\lambda >0 $.}
  \label{fig:lambda_fig}
\end{figure}

\newpage
\subsubsection{Impact of Trigger Size}
We notice that our attack consistently reduces the episodic reward regardless of the trigger size, making our attack resilient to change in appropriate trigger size.

\begin{figure}[h!]
  \centering
  \begin{subfigure}[h!]{0.23\linewidth}
    \centering
    \includegraphics[width=\linewidth,height=0.23\textheight,keepaspectratio]{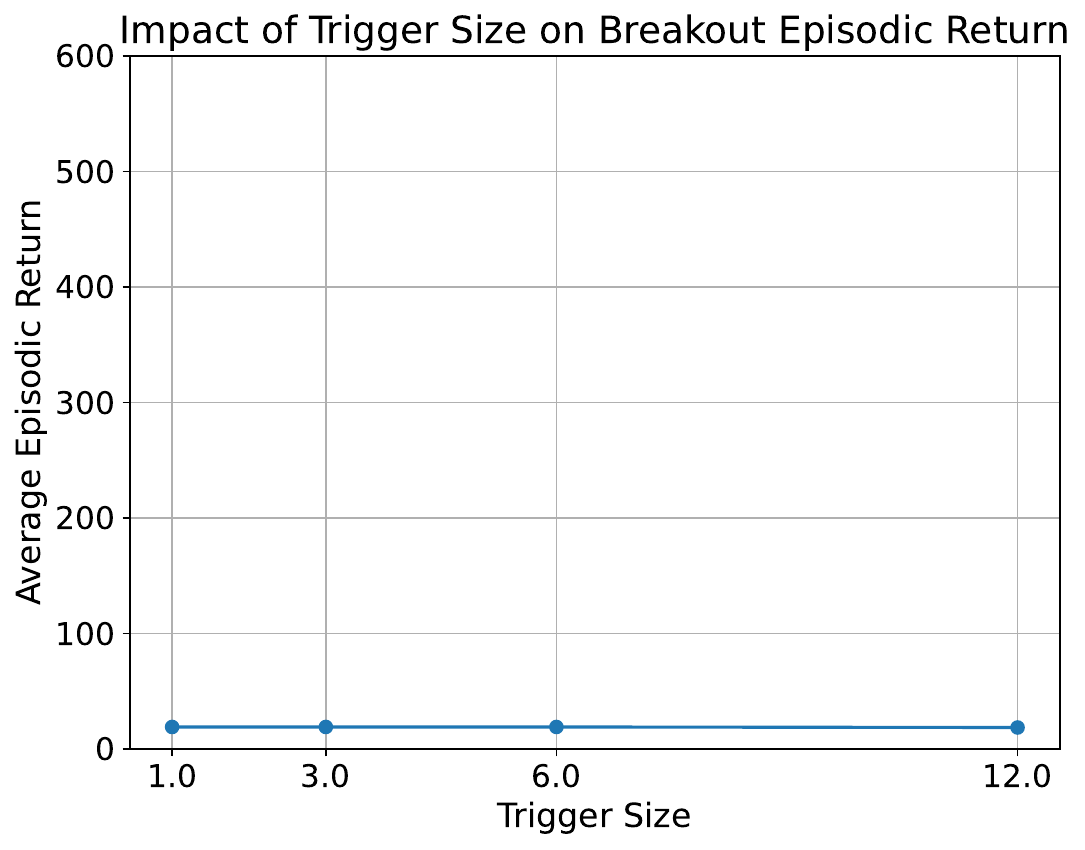}
    \caption{Breakout}
    \label{fig:trigger_breakout}
  \end{subfigure}
  \hspace{0.08\linewidth} 
  \begin{subfigure}[h!]{0.23\linewidth}
    \centering
    \includegraphics[width=\linewidth,height=0.23\textheight,keepaspectratio]{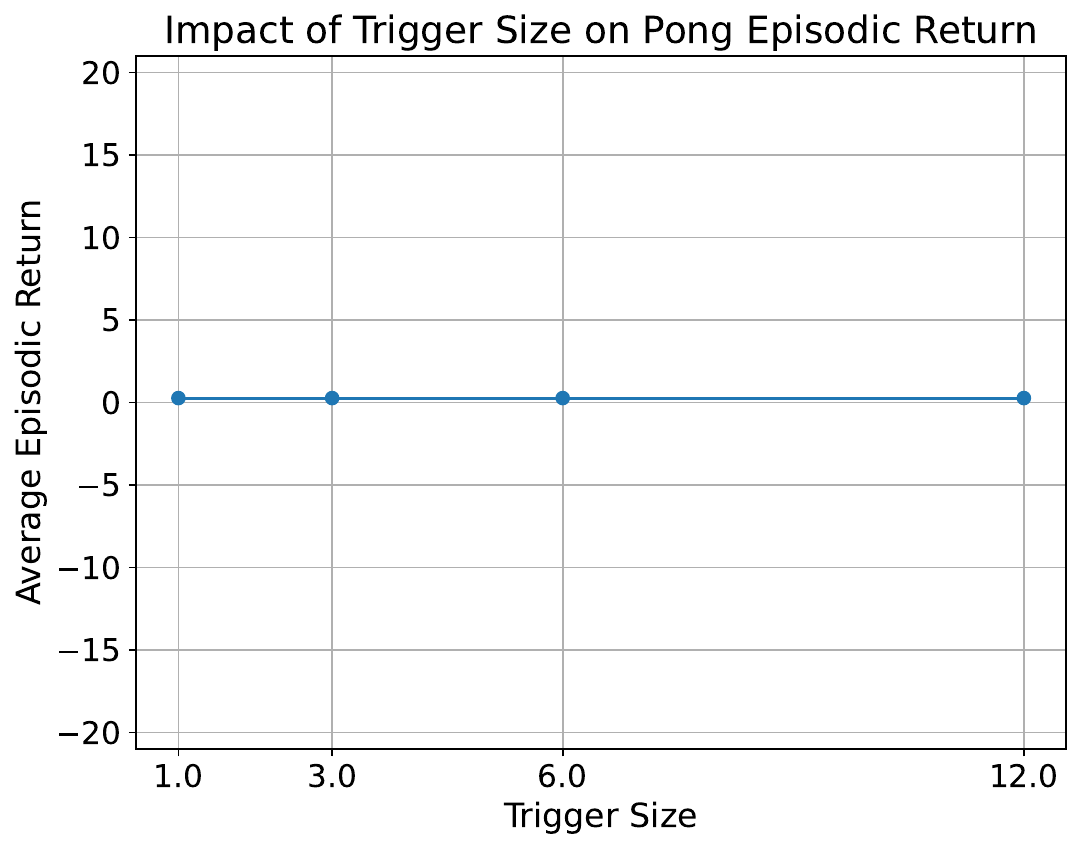}
    \caption{Pong}
    \label{fig:trigger_pong}
  \end{subfigure}
  
  \vspace{0.2ex} 
  
  \begin{subfigure}[h!]{0.23\linewidth}
    \centering
    \includegraphics[width=\linewidth,height=0.23\textheight,keepaspectratio]{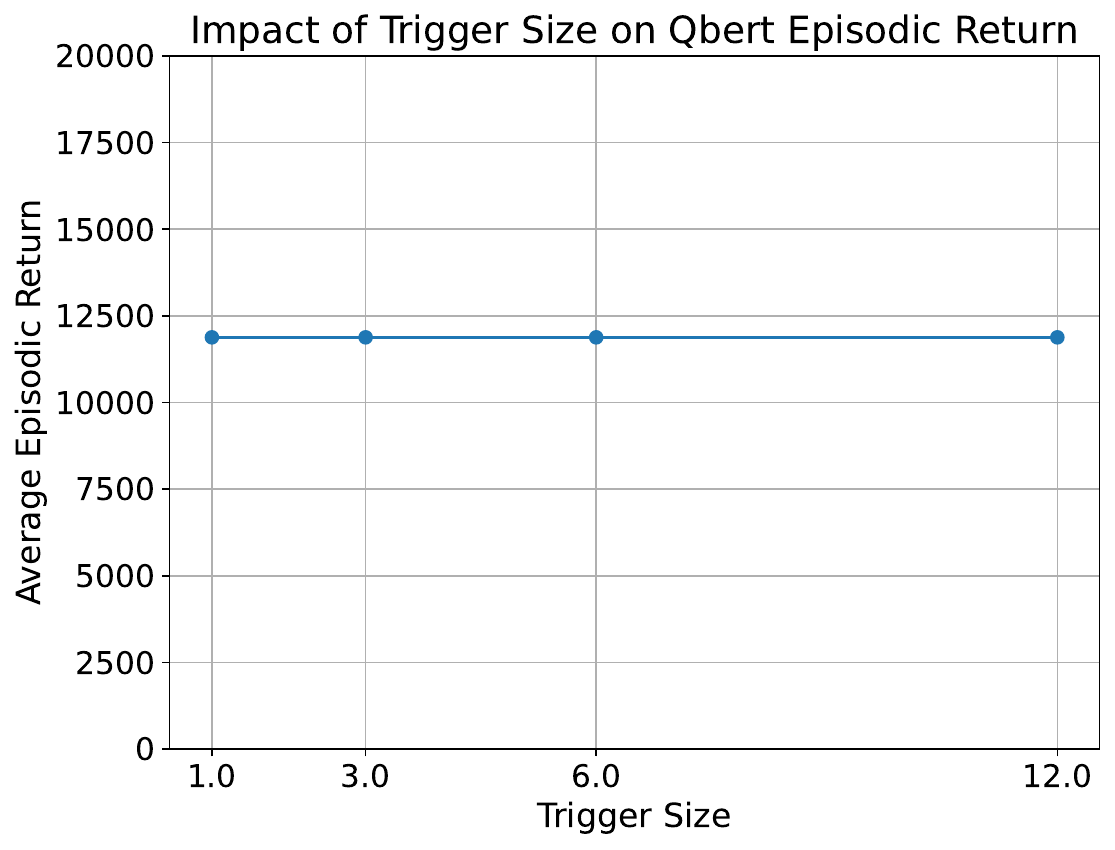}
    \caption{Qbert}
    \label{fig:trigger_qbert}
  \end{subfigure}
  \hspace{0.08\linewidth} 
  \begin{subfigure}[h!]{0.23\linewidth}
    \centering
    \includegraphics[width=\linewidth,height=0.23\textheight,keepaspectratio]{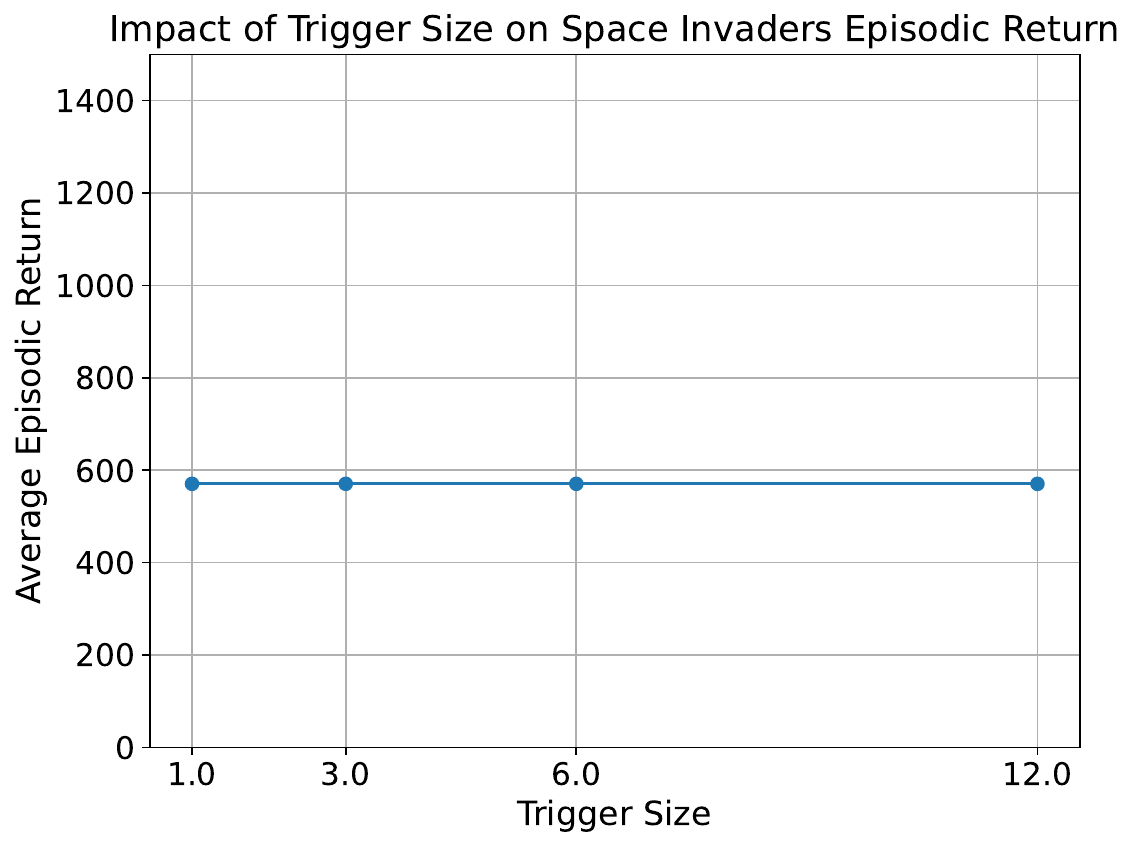}
    \caption{Space Invaders}
    \label{fig:trigger_spaceinv}
  \end{subfigure}
  
  \caption{Ablation study in InfrectroRL for varying the trigger size from 1 to 12 in all 4 games. We notice that InfrectroRL is highly effective regardless of the trigger size.}
  \label{fig:trigger_fig}
\end{figure}


\subsubsection{Impact of Target Label}
Given that our agent is a decision-making algorithm, we assume that the target actions can also affect the overall episodic return. As we set the hyperparameters, we set the remaining hyperparameters to a static values. We notice that all target label actions amount to similar levels of performance in the average episodic return. We assume this is primarily because of the repetitive actions made by the agent that cause it to reach a corner (or stay in the exact same position) in the environment.

\begin{figure}[h!]
  \centering
  \begin{subfigure}[h!]{0.23\linewidth}
    \centering
    \includegraphics[width=\linewidth,height=0.23\textheight,keepaspectratio]{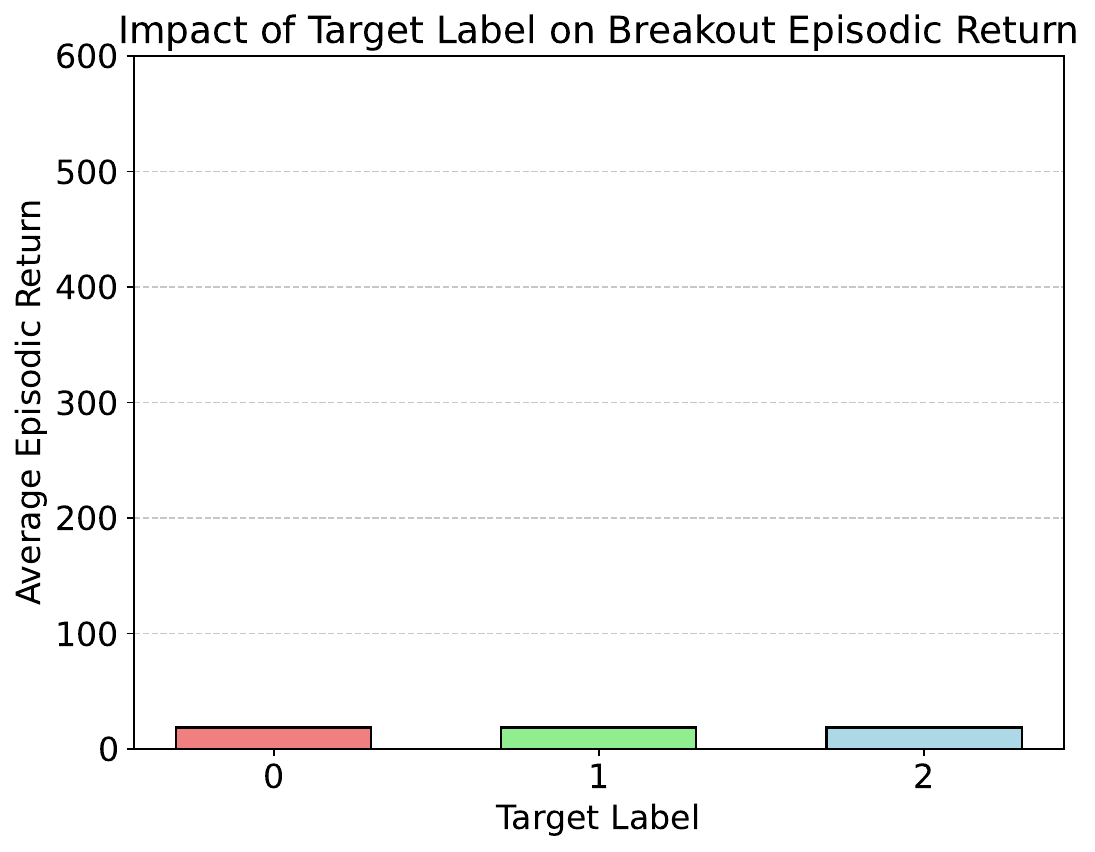}
    \caption{Breakout}
    \label{fig:target_breakout}
  \end{subfigure}
  \hspace{0.08\linewidth} 
  \begin{subfigure}[h!]{0.23\linewidth}
    \centering
    \includegraphics[width=\linewidth,height=0.23\textheight,keepaspectratio]{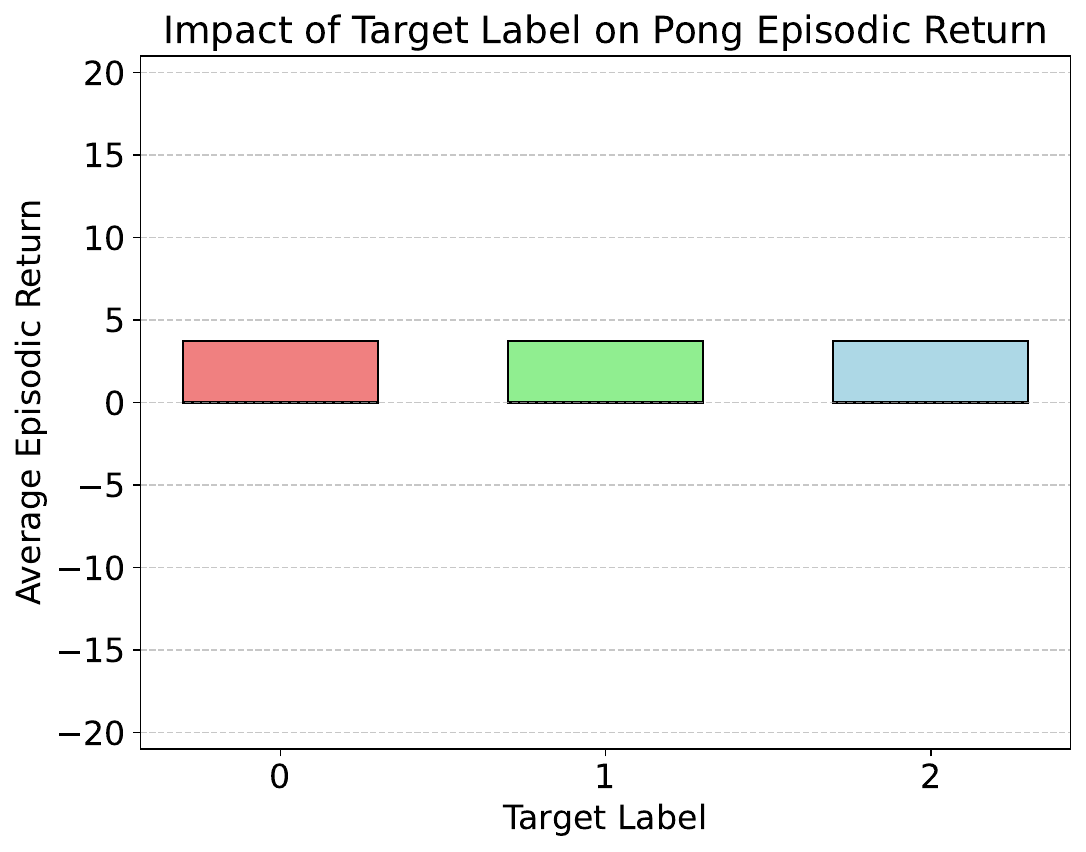}
    \caption{Pong}
    \label{fig:target_pong}
  \end{subfigure}
  
  \vspace{0.2ex} 
  
  \begin{subfigure}[h!]{0.23\linewidth}
    \centering
    \includegraphics[width=\linewidth,height=0.23\textheight,keepaspectratio]{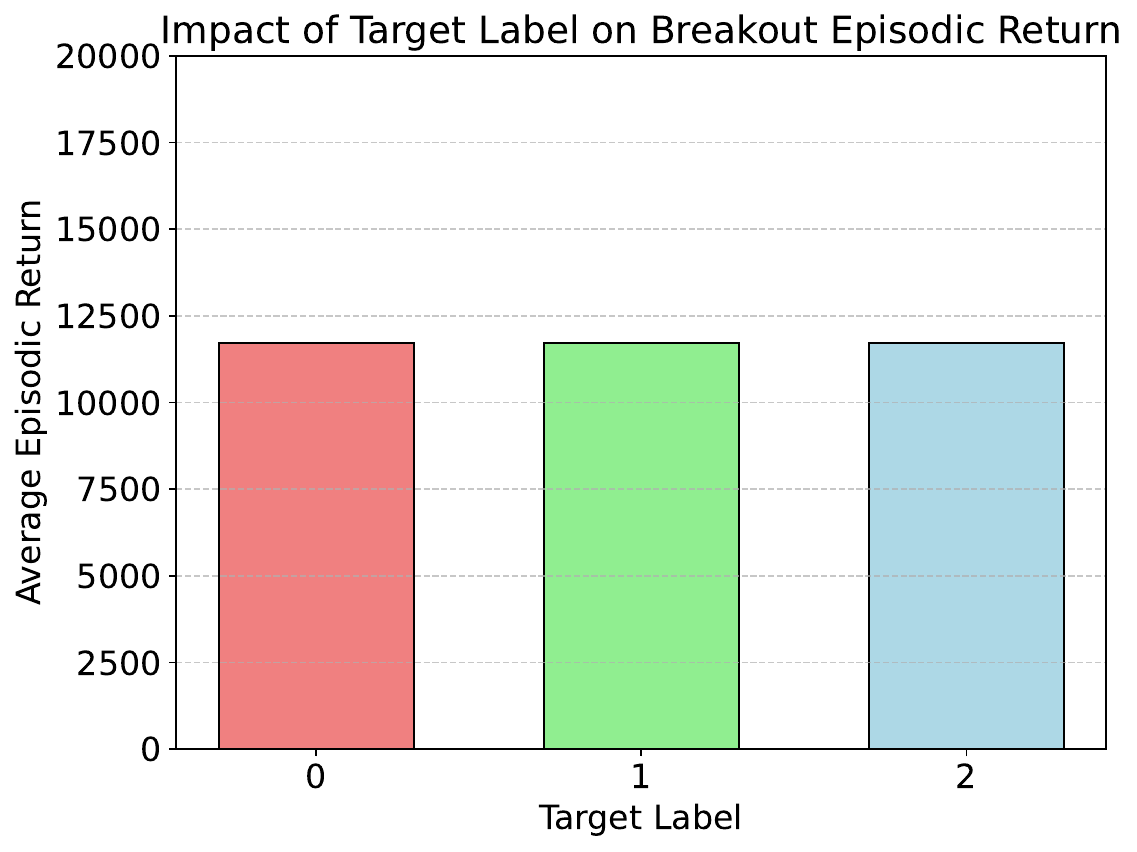}
    \caption{Qbert}
    \label{fig:target_qbert}
  \end{subfigure}
  \hspace{0.08\linewidth} 
  \begin{subfigure}[h!]{0.23\linewidth}
    \centering
    \includegraphics[width=\linewidth,height=0.23\textheight,keepaspectratio]{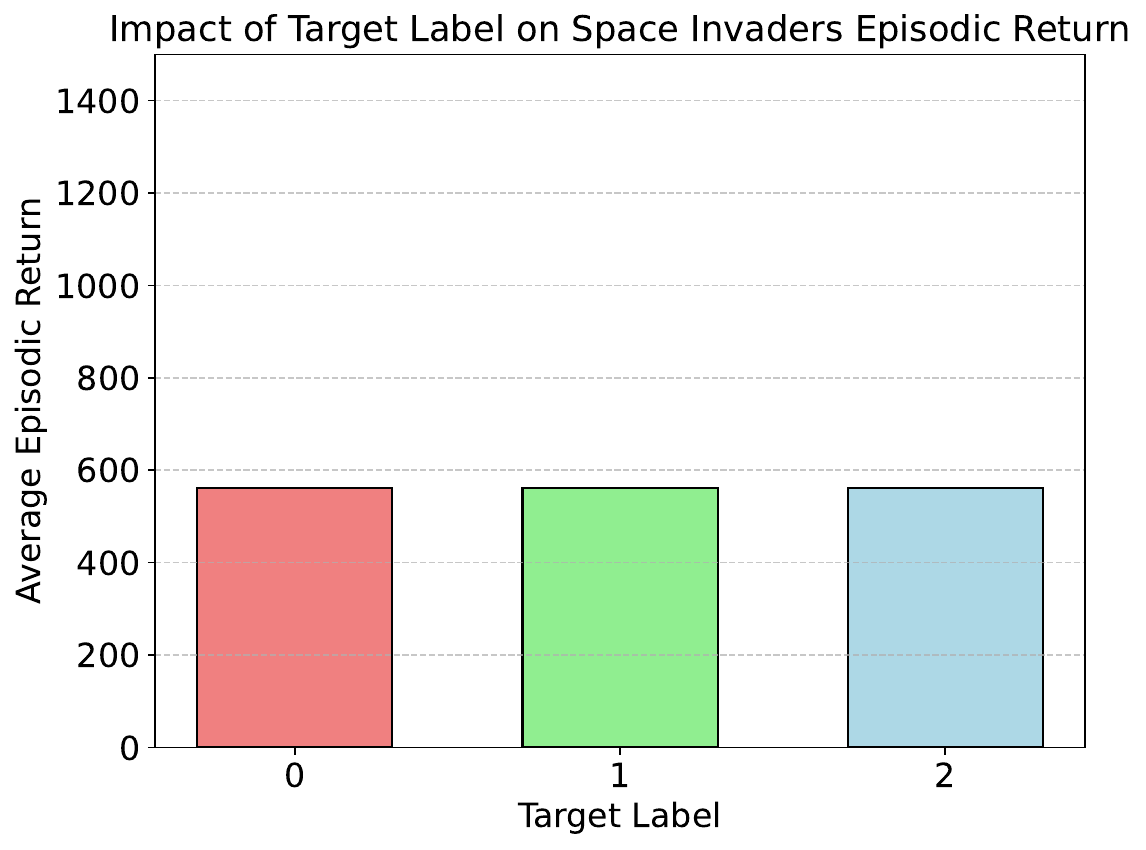}
    \caption{Space Invaders}
    \label{fig:target_spaceinv}
  \end{subfigure}
  
  \caption{Ablation study in InfrectroRL for varying the target label for all 4 games. We notice that the backdoor effects of InfrectroRL on the episodic return is resilient regardless of the target label.}
  \label{fig:target_fig}
\end{figure}

\end{document}